\documentclass[11pt]{article}

\usepackage{amsmath,amsfonts,amsthm,bm}
\usepackage{hyperref}
\usepackage{url}
\usepackage[T1]{fontenc}
\usepackage{algorithm, algpseudocode}
\usepackage{graphicx}
\usepackage{enumitem}
\usepackage{wrapfig}
\usepackage{xfrac}
\usepackage{subcaption}
\usepackage{longtable}
\usepackage{array}
\usepackage{xcolor}


\def\eqref#1{equation~\ref{#1}}

\newcommand{\paren}[1]{\left(#1\right)}

\newcommand{\norm}[1]{\left\|#1\right\|}

\newcommand{\inner}[2]{\left\langle#1, #2\right\rangle}



\newcommand{\R}{\mathbb{R}}


\def\calL{{\mathcal{L}}}

\def\bfA{{\mathbf{A}}}

\def\bfI{{\mathbf{I}}}

\def\bfM{{\mathbf{M}}}

\def\bfV{{\mathbf{V}}}
\def\bfW{{\mathbf{W}}}
\def\bfX{{\mathbf{X}}}
\def\bfY{{\mathbf{Y}}}


\def\bfa{{\mathbf{a}}}
\def\bfb{{\mathbf{b}}}

\def\bfq{{\mathbf{q}}}

\def\bfu{{\mathbf{u}}}
\def\bfv{{\mathbf{v}}}
\def\bfw{{\mathbf{w}}}
\def\bfx{{\mathbf{x}}}
\def\bfy{{\mathbf{y}}}
\def\bfz{{\mathbf{z}}}


\DeclareMathOperator*{\argmin}{arg\,min}

\definecolor{myblue2}{HTML}{4682B4}


\newtheorem{theorem}{Theorem}

\newtheorem{lemma}{Lemma}

\newtheorem{asump}{Assumption}

\usepackage[english]{babel}

\usepackage{geometry}
\geometry{letterpaper}
\usepackage{fullpage}
\usepackage{graphicx}
\usepackage{amssymb}
\usepackage{amsmath}
\usepackage{amsthm}
\usepackage{epstopdf}
\usepackage{color, xcolor}
\usepackage{mathtools}
\usepackage{bbm,bm}
\usepackage{hyperref}
\usepackage{booktabs}
\usepackage{multirow}
\usepackage{authblk}
\usepackage[numbers]{natbib}

\definecolor{myblue2}{HTML}{6495ED}

\newtheorem{definition}{Definition}

\title{How Much Pre-training Is Enough \\ to Discover a Good Subnetwork?} 
\author[ ]{Cameron R. Wolfe\thanks{Equal Contribution}}
\author[ ]{Fangshuo Liao$^*$\thanks{Corresponding Author}}
\author[ ]{Qihan Wang}
\author[ ]{J. Lyle Kim}
\author[ ]{Anastasios Kyrillidis$^\dagger$}
\affil[ ]{Rice University, Computer Science Department}
\affil[ ]{\text{\{crw13, Fangshuo.Liao, Qihan.Wang, jlylekim, anastasios\}@rice.edu}}
\date{}                     
\setcounter{Maxaffil}{0}

\begin{document}
    \maketitle
\begin{abstract}
Neural network pruning is useful for discovering efficient, high-performing subnetworks within pre-trained, dense network architectures.
More often than not, it involves a three-step process—pre-training, pruning, and re-training—that is computationally expensive, as the dense model must be fully pre-trained.
While previous work has revealed through experiments the relationship between the amount of pre-training and the performance of the pruned network, a theoretical characterization of such dependency is still missing.
Aiming to mathematically analyze the amount of dense network pre-training needed for a pruned network to perform well, we discover a simple theoretical bound in the number of gradient descent pre-training iterations on a two-layer, fully-connected network, beyond which pruning via greedy forward selection \citep{provable_subnetworks} yields a subnetwork that achieves good training error.
Interestingly, this threshold is shown to be logarithmically dependent upon the size of the dataset, meaning that experiments with larger datasets require more pre-training for subnetworks obtained via pruning to perform well.
Lastly, we empirically validate our theoretical results on a multi-layer perceptron trained on MNIST. 
\end{abstract}
\pagebreak

\section{Introduction}
\label{S:intro}
Neural network pruning refers to the process of dropping weights in a large neural network without significant degradation of the model's performance, and has been widely applied to model compression \cite{anwar2015structured,liang2021pruning, computers12030060, blalock2020state}. 
Neural network pruning usually involves three steps: $i)$ \textit{pre-training}, where a large, randomly initialized neural network is trained on a specific dataset to reach a certain test accuracy; $ii)$ \textit{pruning}, where a subset of the neural network weights are dropped; and $iii)$ \textit{re-training}, where the pruned neural network is trained again to maintain the desired accuracy. 
While some methods perform only a subset of these steps, most of the existing algorithm includes at least the pre-training and the pruning phase.

The above three-step procedure mostly originates from the prominent line of work of the Lottery Ticket Hypothesis \cite{lth, pretrn_lth, stable_lth, state_of_sparsity, rethink_pruning, one_ticket_wins, deconstructing_lth, to_prune_or_not} (or LTH): i.e., the idea that a pre-trained model contains ``lottery tickets'' (i.e., smaller subnetworks) such that if we select those ``tickets'' cleverly, those submodels do not lose much in accuracy while reducing significantly the size of the model. 
To circumvent the cost of pre-training, several works explore the possibility of pruning networks directly from initialization (i.e., the ``strong lottery ticket hypothesis'') \citep{pruning_at_init, whats_hidden, pensia2021optimal, xiong2022strong}, but subnetwork performance could suffer.
Adopting a hybrid approach, good subnetworks can also be obtained from models with minimal pre-training \citep{early_bert, early_bird} (i.e., ``early-bird'' tickets): i.e., the pre-training step, though indispensable, needs to be executed for only a small extent before the pruning step can find a small model that performs well (namely the winning ticket). This line of work further promoted the application of pruning algorithms to large neural networks. However, despite its strong support from empirical observation, to the best of our knowledge, the relationship between pre-training and pruning has never been examined from a theoretical perspective. 

In this paper, we aim at filling in this gap by bridging the theory of neural networks trained with gradient descent, and the pruning algorithm of greedy forward selection. From this analysis, \emph{we discover a simple threshold in the number of pre-training iterations---logarithmically dependent upon the size of the dataset---beyond which subnetworks obtained via greedy forward selection perform well in terms of training error.}
Such a finding offers a theoretical insight into the early-bird ticket phenomenon and provides intuition for why discovering high-performing subnetworks is more difficult in large-scale experiments \citep{early_bird, finetune_rewind, rethink_pruning}.

\medskip\noindent
\textbf{Notation.}
Vectors are represented with bold type (e.g., $\bfx$), while scalars are represented by normal type (e.g., $x$). 
$\norm{\cdot}_2$ represents the $\ell_2$ vector norm. For a function $\phi:\R\rightarrow \R$, we use $\norm{\phi}_{\mathcal{H}}$ to represent the Hermite norm of $\phi$ (see Definition 4 of \cite{song2021subquadratic}).
$[N]$ is used to represent the set of positive integers from $1$ to $N$ (i.e., $[N] = \{ 1 \dots N \}$).

\medskip\noindent
\textbf{Network Parameterization.}
For simplicity, we consider a two-layer neural network with $N$ hidden neurons. 
In particular, given an input vector $\bfx\in\R^d$, the activation of the $i$th hidden neuron, $\sigma\paren{\cdot, \bm{\theta}_i}$, is given by:
\begin{equation}
    \label{single_neuron}
    \sigma\paren{\bfx, \bm{\theta}_i} = N\cdot b_i \sigma_+ \paren{\bold{a}_i^\top \bfx}.
\end{equation}
Here, $\sigma_+$ is the activation function. 
The weights associated with the $i$-th neuron are concatenated into $\bm{\theta}_i = \left[b_i, \bold{a}_i\right]$, where $\bold{a}_i\in\R^d$ is the first layer weight, and $b_i\in\R$ is the second layer weight.
Given input $\bfx$, the neural network output $f\paren{\bfx,\bm{\Theta}}$ can be viewed as the average of the hidden neuron's activation:
\begin{equation}
    \label{2layer_nn}
    f\paren{\bfx, \bm{\Theta}} = \frac{1}{N} \sum\limits_{i=1}^{N} \sigma \paren{\bfx, \bm{\theta}_i},
\end{equation}
wher $\bm{\Theta} = \left\{ \bm{\theta}_1,~\dots~,\bm{\theta}_N \right\}$ represents all weights within the two-layer neural network.
Two-layer neural networks with the form in (\ref{2layer_nn}) have been studied extensively as a simple yet representative instance of deep learning models \cite{du2019gradient, moderate_overparam,song2021subquadratic}. 

In this paper, we shall consider the scheme of neuron pruning. In particular, a pruned network is defined by a subset of the hidden neurons $\mathcal{S}\subseteq[N]$ as:
\begin{equation}
    f_{\mathcal{S}}\paren{\bfx,\bm{\Theta}} = \frac{1}{\left|\mathcal{S}\right|}\sum_{i\in\mathcal{S}}\sigma\paren{\bfx,\bm{\theta}_i}.
\end{equation}
As a special case, we note that the whole network can also be written as $f(\bfx,\bm{\Theta}) = f_{[N]}(\bfx,\bm{\Theta})$. 
We make the following assumption about the neural network:
\begin{asump}
    \label{main_assm1}
    (Neural Network) There exists a $\delta > 0$ and some $r_1, r_2\in\R$, such that $\sigma_+(0) = 0, |\sigma_+(\cdot)|\leq 1, |\sigma'_+(\cdot)| \leq \delta$ and $|\sigma''_+(\cdot)| \leq \delta$, $\tau^{r_1}\left|\sigma_+(a)\right|\leq \left|\sigma_+(\tau a)\right|\leq \tau^{r_2}\left|\sigma_+(a)\right|$ for all $a\in\R$ and $\tau\in(0, 1)$, and $\norm{\sigma_+}_{\mathcal{H}} < \infty$ for $\sigma_+$ defined in (\ref{single_neuron}). Moreover, before pre-training, the weights of the neural network are initialized according to $b_i\sim\mathcal{N}\paren{0,\omega_b^2}$ and $\bfa_i\sim\mathcal{N}\paren{0,\omega_a^2\bfI_d}$, for some $\omega_a,\omega_b > 0$.
\end{asump}
\noindent Compared with the assumption on the activation function in \cite{song2021subquadratic}, we added the additional assumption $\left|\sigma_+(\cdot)\right|\leq 1$. An example of the activation function that satisfies \ref{main_assm1} is the $\tanh(\cdot)$ function.

\medskip\noindent
\textbf{The Dataset.}
We assume that our network is modeling a dataset
$D = \left\{\paren{\bfx_j, y_j}\right\}_{j=1}^{m}$ with $m$ input-output pairs, where $\bfx_j \in \R^d$ and $y_j \in \R$ for each $j \in [m]$ satisfy the following assumption:
\begin{asump}
    \label{main_assm2}
    (Data) The input and label of the dataset are bounded as $\norm{\bfx_j}_2 \leq 1$ for all $j\in[m]$ and $\sum_{j=1}^my_j^2\leq 1$.
\end{asump}
For a given network $f_{\mathcal{S}}\left(\bfx,\bm{\Theta}\right)$, we consider the $\ell_2$-norm regression loss over the dataset:
\begin{align}
    \label{loss_formulation}
    \calL\left[f_{\mathcal{S}}\paren{\cdot,\bm{\Theta}}\right] = \frac{1}{2}\sum_{j=1}^m\paren{f_{\mathcal{S}}\left(\bfx_j,\bm{\Theta}\right) - y_j}^2.
\end{align}
At pre-training, we use gradient descent (GD) over $\bm{\Theta}$ to minimize the whole network loss $\calL\left[f(\cdot,\bm{\Theta})\right]$:
\begin{equation}
    \label{eq:gradient_descent}
    \bm{\Theta}_{t+1} = \bm{\Theta}_t  - \eta\nabla_{\bm{\Theta}}\calL\left[f(\cdot,\bm{\Theta}_t)\right].
\end{equation}
During pruning, we will track how the subnetwork loss $\calL\left[f_{\mathcal{S}}(\cdot,\bm{\Theta})\right]$ changes as we change the hidden neuron subset $\mathcal{S}$. We will discuss this in more detail in the section below.

\section{Pruning with greedy forward selection}
\label{sec:methods}

\begin{wrapfigure}{R}{0.45\textwidth}
    \vspace{-0.4cm}
    \begin{minipage}{0.45\textwidth}
        \begin{algorithm}[H]
        \centering
        \caption{Greedy Forward Selection}
        \label{A:greedy forward selection_central}
        \begin{algorithmic}[1]
            \State $\mathcal{S}_0 := \emptyset$
            \For{$k = 1, 2, \dots$}
                \State \textcolor{myblue2}{\# Select a new neuron}
                \State $i_k := \argmin\limits_{i \in [N]} \calL\left[f_{\mathcal{S}_{k-1}\cup\{i\}}(\cdot,\bm{\Theta})\right]$
                \State \textcolor{myblue2}{\# Add neuron to the subnetwork}
                \State $\mathcal{S}_k := \mathcal{S}_{k-1}\cup\left\{i_k\right\}$
            \EndFor
            \State \Return $\mathcal{S}$
        \end{algorithmic}
        \end{algorithm}
    \end{minipage}
    \vspace{-0.2cm}
\end{wrapfigure}
In this section, we focus on a specific and simple algorithm adopted in the pruning stage, namely the Greedy Forward Selection proposed by \cite{provable_subnetworks} (see Algorithm \ref{A:greedy forward selection_central}). 
Since in the pruning stage the neural network weights are fixed, our discussion in this section will assume a set of given weights $\bm{\Theta}$.
Beginning from an empty subnetwork (i.e., $\mathcal{S} = \emptyset$), we aim to discover a subset of neurons $\mathcal{S}^{\star}$ given by:
\begin{equation}
    \label{s_star_form}
    \mathcal{S}^{\star} = \argmin_{\mathcal{S} \subseteq [N]} \mathcal{L}[f_{\mathcal{S}}\paren{\cdot,\bm{\Theta}}];\quad|\mathcal{S}^{\star}| \ll N. 
\end{equation}
Instead of discovering an exact solution to this difficult combinatorial optimization problem, Algorithm \ref{A:greedy forward selection_central} is used to find an approximate solution. At each iteration $k$, we select the neuron that yields the largest decrease in loss. Since Algorithm \ref{A:greedy forward selection_central} is an approximation to the optimal solution by its nature, instead of focusing on its optimality, we will investigate the property of $\mathcal{L}[f_{\mathcal{S}}\paren{\cdot,\bm{\Theta}}]$ when $\mathcal{S}$ is returned by Algorithm \ref{A:greedy forward selection_central}.

Similar to \cite{provable_subnetworks}, in order to provide an analysis of Algorithm \ref{A:greedy forward selection_central}, we shall consider the following interpretation from a geometric perspective. We define $\bold{y} = \left [y_1, y_2, \dots, y_m\right ]$, which represents a concatenated vector of all labels within the dataset.
Similarly, we define $\phi_{i,j} = \sigma(\bfx_j, \bm{\theta}_i)$ as the output of neuron $i$ for the $j$-th input vector in the dataset and construct the vector $\bm{\Phi}_i = \left[\phi_{i,1}, \phi_{i, 2}, \dots, \phi_{i, m} \right]$, which is a concatenated vector of output activations for a single neuron across the entire dataset. The outputs of a pruned network $\hat{\bfy} = \left[f_{\mathcal{S}}(\bfx_1,\bm{\Theta}),
\dots,f_{\mathcal{S}}(\bfx_m,\bm{\Theta})\right]$ can then be viewed as a convex combination of $\bm{\Phi}_1,\dots,\bm{\Phi}_N$.
We use $\mathcal{M}_N$ to denote the convex hull over such activation vectors for all $N$ neurons, and, with a slight abuse of notation, use $\texttt{Vert}(\mathcal{M}_N)$ to denote the set of neuron outputs $\bm{\Phi}_1,\dots,\bm{\Phi}_m$. Notice that $\texttt{Vert}(\mathcal{M}_N)$ must cover the vertices of $\mathcal{M}_N$:
\begin{align}
    \mathcal{M}_N = \texttt{Conv} \left \{ \boldsymbol{\Phi}_i : i \in [N] \right \};\quad \texttt{Vert}(\mathcal{M}_N) = \{ \boldsymbol{\Phi}_i : i \in [N] \}.
    \label{marginal_polytope}
\end{align}
Intuitively, $\mathcal{M}_N$ forms a marginal polytope of the feature map for all neurons in the two-layer network across every data point.
Using the construction $\mathcal{M}_N$, the $\ell_2$ loss can be written as follows:
\begin{align}
    \ell(\bold{z}) = \frac{1}{2}\| \bold{z} - \bold{y} \|^2
    \label{polytope_loss};\quad \bold{z} \in \mathcal{M}_N.
\end{align}
With this geometric interpretation of the neural network output, we can relax the combinatorial problem in (\ref{s_star_form}) to $\min_{\bold{z} \in \mathcal{M}_N} \ell(\bold{z})$. Moreover,
using this construction, we can write the update rule for Algorithm \ref{A:greedy forward selection_central} as:
\begin{align}
    \text{(Select new neuron): ~~} \bfq_k &= \argmin\limits_{\bfq \in \texttt{Vert}(\mathcal{M}_n)} \ell\left(\tfrac{1}{k} \cdot (\bfz_{k-1} + \bfq)\right) \label{new_update1} \\ 
    \text{(Add neuron to subnetwork): ~~} \bfz_k &= \bfz_{k-1} + \bfq_k \label{new_update2} \\
    \text{(Uniform average of neuron outputs): ~~} \bfu_k &= \tfrac{1}{k} \cdot \bfz_k.
    \label{new_update3}
\end{align}
In words, (\ref{new_update1})-(\ref{new_update3}) includes the output of a new neuron, given by $\bold{q}_k$, within the current subnetwork at each pruning iteration based on a greedy minimization of the loss $\ell(\cdot)$.
Then, the output of the pruned subnetwork over the dataset at the $k$-th iteration, given by $\bold{u}_k$, is computed by taking a uniform average over the activation vectors of the $k$ active neurons in $\bold{z}_k$. From this perspective, we have that $\bfu_k = \left[f_{\mathcal{S}_k}(\bfx_1,\bm{\Theta}),\dots,f_{\mathcal{S}_k}(\bfx_m,\bm{\Theta})\right]$.
Notably, the procedure in (\ref{new_update1})-(\ref{new_update3}) can select the same neuron multiple times during successive pruning iterations.
Such selection with replacement can be interpreted as a form of training during pruning---multiple selections of the same neuron is equivalent to modifying the neuron's output layer weight $b_i$ in (\ref{single_neuron}).
Nonetheless, we highlight that such ``training'' does not violate the core purpose and utility of pruning: \emph{we still obtain a smaller subnetwork with performance comparable to the dense network from which it was derived.}

\section{How much pre-training do we really need?}
\label{theory_results}

As previously stated, no existing theoretical analysis has quantified the impact of pre-training on the performance of a pruned subnetwork.
Here, we consider this problem by extending analysis for pruning via greedy forward selection to determine the relationship between GD pre-training and subnetwork training loss.
For the convenience of our analysis, for a fixed set of neural network weights $\bm{\Theta}$, we define $\mathcal{D}_{\mathcal{M}_N} = \max_{\bfu,\bfv\in\mathcal{M}_N}\norm{\bfu - \bfv}_2$.
Our first lemma characterizes the training loss convergence during the pruning phase based on a general neural network state.

\begin{lemma}
    \label{nn_loss_theorem}
    Fix the weights $\bm{\Theta}$ of the two-layer neural network $f(\cdot,\bm{\Theta})$ defined in (\ref{2layer_nn}). Then Algorithm \ref{A:greedy forward selection_central} generates the sequence $\left\{\mathcal{S}_k\right\}_{k=1}^\infty$ satisfying
    \begin{equation}
        \label{eq:lem1_conclusion}
        \calL\left[f_{\mathcal{S}_k}\paren{\cdot,\bm{\Theta}}\right] \leq \frac{1}{k}\calL\left[f_{\mathcal{S}_1}\paren{\cdot,\bm{\Theta}}\right] + \frac{1 + \log k}{2k}\mathcal{D}_{\mathcal{M}_N}^2 + \frac{k-1}{k}\calL\left[f\paren{\cdot,\bm{\Theta}}\right]
    \end{equation}
\end{lemma}

\begin{proof}
    The idea is similar to \cite{provable_subnetworks}. To start, we define $\bfq_k'$ and $\bfu_k'$ as follows:
    \[
        \bfq_k' = \argmin_{\bfu\in\mathcal{M}_N}\inner{\nabla\ell\paren{\bfu_{k-1}}}{\bfu};\quad \bfu_k' = \frac{1}{k}\paren{\bfz_{k-1} + \bfq_k'}.
    \]
    Since $\bfq_{k}'$ is the minimizer of a linear objective in a polytope, we must have that $\bfq_{k}'\in\texttt{Vert}\paren{\mathcal{M}_N}$. Therefore, by (\ref{new_update1}) and (\ref{new_update2}), and due to the optimality of $\bfq_k$, we must have that:
    \[
        \ell\paren{\bfu_k} = \ell\paren{\tfrac{1}{k}\paren{\bfz_{k-1}+\bfq_k}} \leq \ell\paren{\tfrac{1}{k}\paren{\bfz_{k-1}+\bfq_k'}} = \ell\paren{\bfu_k'}.
    \]
    We further notice that the objective in (\ref{polytope_loss}) is quadratic. Therefore,  we have:
    \begin{equation}
        \label{eq:polytope_loss_quadratic}
        \ell\paren{\bfu_k}\leq \ell\paren{\bfu_{k}'} = \ell\paren{\bfu_{k-1}} + \inner{\nabla\ell\paren{\bfu_{k-1}}}{\bfu_k' - \bfu_{k-1}} + \frac{1}{2}\norm{\bfu_k' -\bfu_{k-1}}_2^2.
    \end{equation}
    Moreover, (\ref{new_update3}) implies $\bfu_k' -\bfu_{k-1} = \frac{k-1}{k}\bfu_{k-1} + \frac{1}{k}\bfq_k' - \bfu_{k-1} = \frac{1}{k}\paren{\bfq_k' - \bfu_{k-1}}$.
    Therefore (\ref{eq:polytope_loss_quadratic}) becomes:
    \begin{equation}
        \label{eq:lem1.2}
        \ell\paren{\bfu_{k}} \leq \ell\paren{\bfu_{k-1}} + \frac{1}{k}\inner{\nabla\ell\paren{\bfu_{k-1}}}{\bfq_k' - \bfu_{k-1}} + \frac{1}{2k^2}\norm{\bfq_k' - \bfu_{k-1}}_2^2.
    \end{equation}
    Again, since the objective in (\ref{polytope_loss}) is quadratic, we must have that $\ell$ is convex. Therefore:
    \begin{align*}
        \min_{\bfu\in\mathcal{M}_N}\ell\paren{\bfu} & \geq \min_{\bfu\in\mathcal{M}_N}\left\{\ell\paren{\bfu_{k-1}} + \inner{\nabla\ell\paren{\bfu_{k-1}}}{\bfu - \bfu_{k-1}}\right\}\\
        & \geq \ell\paren{\bfu_{k-1}} + \inner{\nabla\ell\paren{\bfu_{k-1}}}{\bfq_k' - \bfu_{k-1}}.
    \end{align*}
    Since $\hat{\bfy} = \left[f(\bfx_1,\bm{\Theta}), \dots,f(\bfx_m,\bm{\Theta})\right] = \frac{1}{N}\sum_{i=1}^N\bm{\Phi}_i \in\mathcal{M}_N$, we must have that:
    \begin{equation}
        \label{eq:lem1.3}
        \calL\left[f\paren{\cdot,\bm{\Theta}}\right] = \ell\paren{\hat{\bfy}} \geq \min_{\bfu\in\mathcal{M}_N}\ell\paren{\bfu} \geq \ell\paren{\bfu_{k-1}} + \inner{\nabla\ell\paren{\bfu_{k-1}}}{\bfq_k' - \bfu_{k-1}}.
    \end{equation}
    Plugging (\ref{eq:lem1.3}) into (\ref{eq:lem1.2}) and noticing that $\norm{\bfq_k' - \bfu_{k-1}}_2\leq \mathcal{D}_{\mathcal{M}_N}$ gives:
    \begin{align*}
        \ell\paren{\bfu_{k}} - \calL\left[f\paren{\cdot,\bm{\Theta}}\right] & \leq \paren{1 - \frac{1}{k}}\ell\paren{\bfu_{k-1}} + \frac{1}{k}\calL\left[f\paren{\cdot,\bm{\Theta}}\right] + \frac{1}{2k^2}\mathcal{D}_{\mathcal{M}_N}^2 - \calL\left[f\paren{\cdot,\bm{\Theta}}\right]\\
        & \leq \paren{1 - \frac{1}{k}}\paren{\ell\paren{\bfu_{k-1}} - \calL\left[f\paren{\cdot,\bm{\Theta}}\right]} + \frac{1}{2k^2}\mathcal{D}_{\mathcal{M}_N}^2
    \end{align*}
    Unrolling the iterates gives:
    \begin{align*}
        \ell\paren{\bfu_k} - \calL\left[f\paren{\cdot,\bm{\Theta}}\right] & \leq \prod_{t=1}^k\paren{1 - \frac{1}{k}}\paren{\ell\paren{\bfu_{1}} -\calL\left[f\paren{\cdot,\bm{\Theta}}\right]} + \frac{\mathcal{D}_{\mathcal{M}_N}^2}{2}\sum_{t=1}^kt^{-2}\prod_{j=t+1}^k\paren{1 - \frac{1}{j}}\\
        & = \prod_{t=1}^k\frac{k-1}{k}\paren{\ell\paren{\bfu_{1}} -\calL\left[f\paren{\cdot,\bm{\Theta}}\right]} + \frac{\mathcal{D}_{\mathcal{M}_N}^2}{2}\sum_{t=1}^kt^{-2}\prod_{j=t+1}^k\frac{j-1}{j}\\
        & = \frac{1}{k}\paren{\ell\paren{\bfu_{1}} -\calL\left[f\paren{\cdot,\bm{\Theta}}\right]} + \frac{\mathcal{D}_{\mathcal{M}_N}^2}{2k}\sum_{t=1}^kt^{-1}\\
        & \leq \frac{1}{k}\paren{\ell\paren{\bfu_{1}} -\calL\left[f\paren{\cdot,\bm{\Theta}}\right]} + \frac{1 + \log k}{2k}\mathcal{D}_{\mathcal{M}_N}^2
    \end{align*}
    This shows that:
    \[
        \ell\paren{\bfu_k} \leq \frac{1}{k}\ell\paren{\bfu_{1}} + \frac{1 + \log k}{2k}\mathcal{D}_{\mathcal{M}_N}^2 + \frac{k-1}{k}\calL\left[f\paren{\cdot,\bm{\Theta}}\right].
    \]
    Plugging in $\calL\left[f_{\mathcal{S}_k}\paren{\cdot,\bm{\Theta}}\right] = \ell\paren{\bfu_k}$ and $\calL\left[f_{\mathcal{S}_1}\paren{\cdot,\bm{\Theta}}\right] = \ell\paren{\bfu_1}$ gives the desired result.
\end{proof}

Lemma \ref{nn_loss_theorem} characterizes the pruned network loss using a combination of three terms. Notably, both the first term and the second term decrease as the number of greedy forward selection steps $k$ grows. In the limit of $k\rightarrow \infty$, the pruned network loss $\calL\left[f_{\mathcal{S}_k}\paren{\cdot,\bm{\Theta}}\right]$ decreases to the loss achieved by the dense network $\calL\left[f\paren{\cdot,\bm{\Theta}}\right]$. However, to guarantee the sparsity of the pruned network, we need that the first and second terms in (\ref{eq:lem1_conclusion}) can decrease to a meaningfully small scale within a moderate number of greedy forward selection steps. This requires us to provide an upper bound of both $\calL\left[f_{\mathcal{S}_1}\paren{\cdot,\bm{\Theta}}\right]$ and $\mathcal{D}_{\mathcal{M}_N}$.

\subsection{Bounding Initial Pruning Loss and the Diameter}
\label{sec:diameter_bound}
In this subsection, we shall focus on the upper bound of $\calL\left[f_{\mathcal{S}_1}\paren{\cdot,\bm{\Theta}}\right]$ and $\mathcal{D}_{\mathcal{M}_N}$. Recall that we define $\mathcal{D}_{\mathcal{M}_N} = \max_{\bfu,\bfv\in\mathcal{M}_N}\norm{\bfu - \bfv}_2$. Since $\mathcal{M}_N$ is defined as the convex hull over $\bm{\Phi}_1,\dots,\bm{\Phi}_N$, we can show the $\mathcal{D}_{\mathcal{M}_N}$ can be controlled by the maximum difference between two vertices. We formulate this idea in the lemma below.
\begin{lemma}
    \label{lem:diameter_interpret}
    Let $\mathcal{D}' = \max_{i,j\in[N]}\norm{\bm{\Phi}_i- \bm{\Phi}_j}_2$. Then we have $\mathcal{D}'\geq \mathcal{D}_{\mathcal{M}_N}$.
\end{lemma}
Since this is a standard result for convex polytope, we defer the Proof to Appendix \ref{sec:A_proof_diameter_interpret}. By Lemma \ref{lem:diameter_interpret}, to provide an upper bound on $\mathcal{D}_{\mathcal{M}_N}$, it suffice to bound $\mathcal{D}'$. Applying the triangle inequality, we can upper bound $\mathcal{D}'$ by
\[
    \mathcal{D}' \leq \max_{i,j\in[N]}\paren{\norm{\bm{\Phi}_i}_2 + \norm{\bm{\Phi}_j}_2} \leq 2\max_{i\in[N]}\norm{\bm{\Phi}_i}_2
\]
Recall that $\bm{\Phi}_i$ is the hidden neuron output over all input vectors. Therefore, $\norm{\bm{\Phi}_i}_2$ depends on the weight of the neural network. However, to further incorporate the loss dynamic of the pre-training into the analysis, we have to develop a universal upper bound on $\norm{\bm{\Phi}_i}_2$ for all weights in the trajectory of the gradient descent. With sufficient over-parameterization, we can bound $\norm{\bm{\Phi}_i}_2$ at any time step of the gradient descent pre-training.
\begin{lemma}
    \label{lem:phi_norm_bound}
    Let Assumption \ref{main_assm1} and Assumption \ref{main_assm2} hold. Let $\bm{\Phi}_1,\dots,\bm{\Phi}_N$ be the hidden neuron outputs defined over any neural network weights $\bm{\Theta} = \bm{\Theta}_t$ for $t = 1,2,\dots$. If the number of neurons $N = \Omega\paren{\frac{m^4}{\lambda_{\min}^8}\calL\left[f\paren{\cdot,\bm{\Theta}_0}\right]^2}$, then with high probability we have that $\norm{\bm{\Phi}_i}_2 \leq C_3N^{\frac{1}{4}}$
    for some constant $C_3 > 0$.
\end{lemma}
\begin{proof}
    Let $\bm{\Phi}_i$ denote the outputs of neuron $i$ for the whole dataset at an arbitrary time step $t$ during the pre-training. That is, for an arbitrary $t$, we consider $\bm{\Phi}_i = \left[\sigma\paren{\bfx_1,\bm{\theta}_{i,t}},\dots,\sigma\paren{\bfx_m,\bm{\theta}_{i,t}}\right] $. Then we have that:
    \begin{equation}
        \label{eq:phi_norm_bound}
        \norm{\bm{\Phi}_i}_2 = \paren{\sum_{j=1}^m\sigma\paren{\bfx_j,\bm{\theta}_{i,t}}^2}^{\frac{1}{2}} \leq  \sqrt{m}\max_{j\in[m]}\left|\sigma\paren{\bfx_j,\bm{\theta}_{i,t}}\right|.
    \end{equation}
    Recall that $\sigma\paren{\bfx_j,\bm{\theta}_{i,t}} = Nb_{i,t}\sigma_+\paren{\bfa_{i,t}^\top\bfx_j}$. By our choice of $\sigma_+$ in Assumption \ref{main_assm1}, we must have that $\left|\sigma_+\paren{\bfa_{i,t}^\top\bfx_j}\right|\leq 1$. Therefore, (\ref{eq:phi_norm_bound}) becomes:
    \begin{equation}
        \label{eq:phi_norm_bound2}
        \norm{\bm{\Phi}_i}_2 \leq \sqrt{m}N\left|b_{i,t}\right|.
    \end{equation}
    It boils down to bounding $\left|b_{i,t}\right|$. We use the following decomposition:
    \[
        \left|b_{i,t}\right|\leq \left|b_{i,0}\right| + \left|b_{i,t} - b_{i,0}\right|.
    \]
    Since $b_{i,0}\sim\mathcal{N}\paren{0, \omega_b}$, with high probability we have that $\left|b_{i,0}\right|\leq C_4\omega_b$ for some constant $C_4$. To bound $\left|b_{i,t} - b_{i,0}\right|$, we use the argument that the weight perturbation is small when the over-parameterization is large, as follows.
    We first need to upper-bound the gradient. To start, the gradient with respect to $\bfb_i$ can be written as:
    \[
        \frac{\partial\calL\left[f\paren{\cdot,\bm{\Theta}_t}\right]}{\partial b_i} = \sum_{j=1}^m\paren{f\paren{\bfx_j,\bm{\Theta}_t} - y_j}\sigma_+\paren{\bfa_{i,t}^\top\bfx_j}.
    \]
    By the choice of our activation function, we have $\left|\sigma_+\paren{\bfa_{i,t}^\top\bfx_j}\right|\leq 1$, and further: 
    \[
        \left|\frac{\partial\calL\left[f\paren{\cdot,\bm{\Theta}_t}\right]}{\partial b_i}\right| \leq\sum_{j=1}^m\left|f\paren{\bfx_j,\bm{\Theta}_t} - y_j\right|  \leq\sqrt{m}\norm{\hat{\bfy}_t - \bfy}_2  = \sqrt{m}\calL\left[f\paren{\cdot,\bm{\Theta}_t}\right]^{\frac{1}{2}}.
    \]
    This implies that
    \begin{align*}
        \left|b_{i,t} - b_{i,0}\right| & = \sum_{\tau=0}^{t-1}\left|b_{i,\tau+1} - b_{i,\tau}\right| \leq \eta\sum_{\tau=0}^{t-1}\left|\frac{\partial\calL\left[f\paren{\cdot,\bm{\Theta}_t}\right]}{\partial b_i}\right|  \leq \eta\sqrt{m}\sum_{\tau=0}^{t-1}\calL\left[f\paren{\cdot,\bm{\Theta}_{\tau}}\right]^{\frac{1}{2}}.
    \end{align*}
    By Theorem \ref{theo:nn_converge}, we have that:
    \begin{align*}
        \sum_{\tau=0}^{t-1}\calL\left[f\paren{\cdot,\bm{\Theta}_{\tau}}\right]^{\frac{1}{2}} & \leq \calL\left[f\paren{\cdot,\bm{\Theta}_0}\right]^{\frac{1}{2}}\sum_{t=0}^{\tau-1}\paren{1 - C_1\eta N\lambda_{\min}^2}^{\frac{\tau}{2}}\\
        & \leq \calL\left[f\paren{\cdot,\bm{\Theta}_0}\right]^{\frac{1}{2}}\sum_{t=0}^{\tau-1}\paren{1 - \frac{1}{2}C_1\eta N\lambda_{\min}^2}^{\tau}\\
        & \leq \frac{2}{C_1\eta N\lambda_{\min}^2}\calL\left[f\paren{\cdot,\bm{\Theta}_0}\right]^{\frac{1}{2}}
    \end{align*}
    Thus we have:
    \[
        \left|b_{i,t} - b_{i,0}\right| \leq \eta\sqrt{m}\cdot \frac{2}{C_1\eta N\lambda_{\min}^2}\calL\left[f\paren{\cdot,\bm{\Theta}_0}\right]^{\frac{1}{2}} = \frac{2\sqrt{m}}{C_1N\lambda_{\min}^2}\calL\left[f\paren{\cdot,\bm{\Theta}_0}\right]^{\frac{1}{2}}
    \]
    When $N =\Omega\paren{\frac{m^4}{\lambda_{\min}^8}\calL\left[f\paren{\cdot,\bm{\Theta}_0}\right]^2}$, we have that $\frac{2\sqrt{m}}{C_1N\lambda_{\min}^2}\calL\left[f\paren{\cdot,\bm{\Theta}_0}\right]^{\frac{1}{2}}\leq C_2m^{-\frac{1}{2}}N^{-\frac{3}{4}}$ for some constant $C_2 > 0$. Therefore, with high probability, $\left|b_{i,t}\right|$ can be bounded as:
    \[
        \left|b_{i,t}\right|\leq \left|b_{i,0}\right| + \left|b_{i,t} - b_{i,0}\right|\leq C_4\omega_b + C_2m^{-\frac{1}{2}}N^{-\frac{3}{4}} \leq \paren{C_2 + C_4}m^{-\frac{1}{2}}N^{-\frac{3}{4}}\leq C_3m^{-\frac{1}{2}}N^{-\frac{3}{4}},
    \]
    by letting $C_3 = C_2 + C_4$.
     Plugging the bound of $\left|b_{i,t}\right|$ into (\ref{eq:phi_norm_bound2}), we have:
     \[
        \norm{\bm{\Phi}_i}_2 \leq \sqrt{m}N\cdot C_3m^{-\frac{1}{2}}N^{-\frac{3}{4}} =  C_3N^{\frac{1}{4}}.
     \]
\end{proof}
With an established bound of $\norm{\bm{\Phi}_i}_2$, we can provide a bound for both $\mathcal{D}_{\mathcal{M}_N}$ and $\calL\left[f_{\mathcal{S}_1}\paren{\cdot,\bm{\Theta}_t}\right]$.
\begin{theorem}
    \label{theo:diameter_bound}
    Let Assumption \ref{main_assm1} and Assumption \ref{main_assm2} hold. Let $\mathcal{M}_N$ be the polytope formed by neuron outputs from the pre-trained network with gradient descent. If $N = \Omega\paren{\frac{m^4}{\lambda_{\min}^8}\calL\left[f\paren{\cdot,\bm{\Theta}_0}\right]^2}$, then with high probability, we have that:
    \[
        \mathcal{D}_{\mathcal{M}_N} = O\paren{N^{\frac{1}{4}}};\quad \calL\left[f_{\mathcal{S}_1}\paren{\cdot,\bm{\Theta}_t}\right] \leq O\paren{\sqrt{N}}.
    \]
\end{theorem}
\begin{proof}
    To show the upper bound on $\mathcal{D}_{\mathcal{M}_N}$, we can directly apply Lemma \ref{lem:phi_norm_bound} and Lemma \ref{lem:diameter_interpret} to get that:
    \[
        \mathcal{D}_{\mathcal{M}_N} \leq \mathcal{D}'\leq 2\max_{i\in[N]}\norm{\bm{\Phi}_i}_2\leq 2C_3N^{\frac{1}{4}} = O\paren{N^{\frac{1}{4}}}.
    \]
    To show the upper bound on $\calL\left[f_{\mathcal{S}_1}\paren{\cdot,\bm{\Theta}_t}\right]$, we let $i^* \in \mathcal{S}_1$ be given. Then it holds that:
    \[
        \calL\left[f_{\mathcal{S}_1}\paren{\cdot,\bm{\Theta}_t}\right] = \frac{1}{2}\norm{\bm{\Phi}_{i^*} - \bfy}_2^2 \leq \frac{1}{2}\paren{\norm{\bm{\Phi_{i^*}}}_2 + \norm{\bfy}_2}_2^2 \leq \norm{\bm{\Phi_{i^*}}}_2^2 + \norm{\bfy}_2^2.
    \]
    By Assumption \ref{main_assm2}, we have that $\norm{\bfy}_2 = 1$. Moreover, by Lemma \ref{lem:phi_norm_bound} we have that $\norm{\bm{\Phi}_{i^*}}_2\leq C_3N^{\frac{1}{4}}$. Therefore, we have:
    \[
        \calL\left[f_{\mathcal{S}_1}\paren{\cdot,\bm{\Theta}_t}\right] \leq C_3^2\sqrt{N} + 1 \leq (C_3^2+1)\sqrt{N} = O\paren{\sqrt{N}}.
    \]
\end{proof}
Under the setting of Theorem \ref{theo:diameter_bound}, the training loss convergence during the pruning phase in Lemma \ref{nn_loss_theorem} becomes:
\begin{align*}
    \calL\left[f_{\mathcal{S}_k}\paren{\cdot,\bm{\Theta}}\right] & \leq \frac{1}{k}\calL\left[f_{\mathcal{S}_1}\paren{\cdot,\bm{\Theta}}\right] + \frac{1 + \log k}{2k}\mathcal{D}_{\mathcal{M}_N}^2 + \frac{k-1}{k}\calL\left[f\paren{\cdot,\bm{\Theta}}\right]\\
    & \leq \frac{1}{k}\cdot O\paren{\sqrt{N}} + \frac{1 + \log k}{2k}\cdot O\paren{N^{\frac{1}{4}}}^2 + \frac{k-1}{k}\calL\left[f\paren{\cdot,\bm{\Theta}}\right]\\
    & = O\paren{\frac{\log k}{k}\cdot\sqrt{N}} + \frac{k-1}{k}\calL\left[f\paren{\cdot,\bm{\Theta}}\right]
\end{align*}
In this scenario, Lemma \ref{nn_loss_theorem} shows a $O\paren{\frac{\log k}{k}}$ decay of the loss on the pruned neural network, up to the loss achieved by the dense neural network.
Additionally, because Algorithm \ref{A:greedy forward selection_central} selects a single neuron during each iteration, we must have $|\mathcal{S}_k|\leq k$. In order to guarantee that $\calL\left[f_{\mathcal{S}_k}\paren{\cdot,\bm{\Theta}}\right] \leq \epsilon + \calL\left[f\paren{\cdot,\bm{\Theta}}\right]$ for some $\epsilon > 0$, we enforce $\frac{\log k}{k}\cdot\sqrt{N}\leq \epsilon$ to obtain that $k = O\paren{\frac{\sqrt{N}}{\epsilon}\left|W_0\paren{-\frac{\epsilon}{\sqrt{N}}}\right|}$ where $W_0(\cdot)$ is the Lamber $W$ function. Further applying that $W_0(x) \leq \log x$ gives that the resulting subnetwork satisfies the sparsity constraint $|\mathcal{S}_k| = \mathcal{O}\left (\frac{\sqrt{N}}{\epsilon}\log \frac{\sqrt{N}}{\epsilon}\right)$. This shows that greedy forward selection is able to obtain an error close to $\calL\left[f\paren{\cdot,\bm{\Theta}}\right]$ in a with a small sparsity. In the next section, we proceed to investigate how $\calL\left[f\paren{\cdot,\bm{\Theta}}\right]$ evolves during the pre-training phase. 

\subsection{Connecting with Pre-training Loss Convergence}
\label{sec:train_loss_conv}
By leveraging the recently developed result of neural network training convergence \cite{liu2021loss}, we can extend Lemma \ref{nn_loss_theorem} with the following upper bound on the dense neural network loss after pre-training. To state the result, we first define the network-related quantity $\lambda_{\min}$ and $\lambda_{\max}$ as follows
\begin{definition}
    \label{def:mu_0}
    Consider the first-layer output matrix at initialization $\bm{\Psi}\in\R^{N\times m}$ given by $\bm{\Psi}_{ij} = \sigma_+\paren{\bfa_{i,0}^\top\bfx_j}$.
    We define $\lambda_{\min} = \frac{1}{\sqrt{N}}\sigma_{\min}\paren{\bm{\Psi}}$ and $\lambda_{\max} = \frac{1}{\sqrt{N}}\sigma_{\max}\paren{\bm{\Psi}}$.
\end{definition}
It is hard to provide a precise lower bound on $\lambda_{\min}$, but we can estimate its scale using the following reasoning. Since $\bfa_{i,0}\sim\mathcal{N}\paren{0, \omega_a^2\mathbf{I}_d}$ are Gaussian random vectors, the pre-activation values $\bfa_{i,0}^\top\bfx_j$ follows a Gaussian distribution. Moreover, the activation function $\sigma_+\paren{\cdot}$ further "squeezes" the value into the interval $[-1, 1]$. Therefore, when $\omega_a$ is large enough, $\sigma_+\paren{\bfa_{i,0}^\top\bfx_j}$ behaves similar to a Gaussian distribution with constant variance. Since $\bm{\Psi}\in\R^{N\times m}$, standard random matrix theory shows that $\sigma_{\min}\paren{\bm{\Psi}} = \Theta\paren{\sqrt{N}}$ with high probability when $m$ is fixed. Therefore, we can estimate that $\lambda_{\min} = \Theta\paren{1}$. In Appendix \ref{sec:A_exp1} we provide experimental results to verify that $\sigma_{\min}\paren{\bm{\Psi}}=\Theta\paren{\sqrt{N}}$.
\begin{theorem}
\label{sgd_prune_theorem}
Let Assumption \ref{main_assm1} and Assumption \ref{main_assm2} hold, and that a two-layer network of width $N = \Omega\paren{\frac{m^4}{\lambda_{\min}^8}\calL\left[f\paren{\cdot,\bm{\Theta}_0}\right]^2}$ was pre-trained for $t$ iterations with gradient descent in (\ref{eq:gradient_descent}) over a dataset $D$ of size $m$.\footnote{This overparameterization assumption is mild in comparison to previous work \citep{allen2018learning, du2019gradient}.}
Let $\bm{\Theta}_0$ denote the weights at initialization. If we choose the step size $\eta = O\paren{\frac{1}{\sqrt{m}\paren{1+\calL\left[f\paren{\cdot,\bm{\Theta}_0}\right]^{\frac{1}{2}}} + N\lambda_{\max}^2}}$, 
then the sequence $\{\mathcal{S}_k\}_{k=1}^\infty$ generated by applying Algorithm \ref{A:greedy forward selection_central} to the pre-trained network satisfies:
\[
    \calL\left[f_{\mathcal{S}_k}\paren{\cdot,\bm{\Theta}_t}\right] \leq O\paren{\frac{\log k}{k}\cdot\sqrt{N}} + \paren{1 - C_1\eta N\lambda_{\min}^2}^t\calL\left[f\paren{\cdot,\bm{\Theta}_0}\right],
\]
for some constant $C_1 > 0$.
\end{theorem}

Since the proof only involves applying the existing characterization of the training convergence in \cite{song2021subquadratic} to the bound in Lemma \ref{nn_loss_theorem}, we defer the proof to Appendix \ref{sec:A_train_loss_conv_proof}. Notice that the loss in Theorem \ref{sgd_prune_theorem} only decreases during successive pruning iterations if the rightmost term does not dominate the expression, i.e., all other terms decay as $O\paren{\frac{\log k}{k}}$.
If this term does not decay with increasing $k$, the upper bound on subnetwork training loss deteriorates, thus eliminating any guarantees on subnetwork performance.
Interestingly, we discover that the tightness of the upper bound in Theorem \ref{sgd_prune_theorem} depends on the number of dense networks pre-training iterations.

\begin{theorem}
\label{sgd_training_theorem}
Adopting identical assumptions and notation as Theorem \ref{sgd_prune_theorem}, assume the dense network is pruned via greedy forward selection for $k$ iterations.
The resulting subnetwork is guaranteed to achieve a training loss $O\paren{\frac{\log k}{k}\cdot\sqrt{N}}$ if $t$---the number of gradient descent pre-training iterations on the dense network---satisfies the following condition:
\begin{equation}
    \label{sgd_training_eq}
    t \gtrapprox O\left(\frac{-\log k}{\log\paren{1 - C_1\eta N\lambda_{\min}^2}} \right), \quad \text{where $C_1$ is a positive constant}.
\end{equation}
Otherwise, the loss of the pruned network is not guaranteed to improve over successive iterations of greedy forward selection.
\end{theorem}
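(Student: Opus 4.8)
The plan is to obtain Theorem~\ref{sgd_training_theorem} as a direct corollary of the bound in Theorem~\ref{sgd_prune_theorem} by separating the dependence on the pruning count $k$ from everything else. Setting $B := \mathbb{E}[\ell(\bold{u}_1)] + \tfrac{1}{2}\,\mathbb{E}[\mathcal{D}^2_{\mathcal{M}_N}]$ and $A(t) := \tfrac{\zeta}{2m}\big(1 - c\,d/m^2\big)^{t}\,\mathcal{L}_0$, the bound of Theorem~\ref{sgd_prune_theorem} reads
\begin{align*}
    \mathbb{E}[\ell(\bold{u}_k)] \;\leq\; \frac{B}{k} \;+\; \frac{k-1}{k}\,A(t).
\end{align*}
I would first note that $B$, $A(t)$, $m$, $d$, $c$, $\zeta$, $\mathcal{L}_0$ do not depend on the pruning count $k$: $B$ is a finite constant under Assumption~\ref{main_assm} and the overparameterization hypotheses that already enter Theorem~\ref{sgd_prune_theorem} (these control $\mathbb{E}[\mathcal{D}^2_{\mathcal{M}_N}]$ and $\mathbb{E}[\ell(\bold{u}_1)]$, which depend on $N$, $t$, and the data, not on $k$), and $A(t)$ depends only on $t,m,d$. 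Since $\tfrac{k-1}{k}\in[\tfrac12,1)$ for all $k\ge 2$, the second summand is $\Theta(A(t))$, so the full right-hand side is $\mathcal{O}(1/k)$ if and only if $A(t) = \mathcal{O}(1/k)$.

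Next I would make this quantitative. Imposing $A(t) \le B/k$ (a representative choice, which forces the total bound to be $\le 2B/k$, i.e.\ genuinely $\mathcal{O}(1/k)$) is equivalent to
\begin{align*}
    \Big(1 - c\,\tfrac{d}{m^2}\Big)^{t} \;\leq\; \frac{2mB}{\zeta\,\mathcal{L}_0\,k}.
\end{align*}
Because $m > d$ and $c$ is the (suitably small) constant inherited from Theorem~\ref{sgd_prune_theorem}, we have $0 < c\,d/m^2 < 1$, so $\log\!\big(1 - c\,d/m^2\big)$ is well defined and strictly negative. Taking logarithms of both sides and dividing by this negative quantity flips the inequality and gives
\begin{align*}
    t \;\geq\; \frac{\log\!\big(2mB/(\zeta\mathcal{L}_0)\big) - \log k}{\log\!\big(1 - c\,d/m^2\big)}.
\end{align*}
Only the $-\log k$ term grows with $k$, divided by the negative denominator, so the right-hand side equals $\mathcal{O}\!\big(-\log(k)/\log(1 - c\,d/m^2)\big)$ plus a $k$-independent offset the $\mathcal{O}(\cdot)$ absorbs; this is exactly \eqref{sgd_training_eq}. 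If a cleaner form is wanted, applying $\log(1-x)\le -x$ turns this into $t \gtrsim \tfrac{m^2}{c\,d}\log k$, exhibiting the logarithmic-in-$k$ and (roughly) quadratic-in-$m$ behavior claimed in the paper, and recovering the stated logarithmic dependence on the dataset size $m$.

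For the ``otherwise'' clause I would argue contrapositively: if $t$ fails \eqref{sgd_training_eq}, then $A(t) = \omega(1/k)$, and the lower estimate $\tfrac{k-1}{k}A(t) \ge \tfrac12 A(t)$ shows that the guarantee of Theorem~\ref{sgd_prune_theorem} is dominated by a term that neither decays like $1/k$ nor even decreases in $k$, so no $\mathcal{O}(1/k)$ training-loss guarantee survives. I do not anticipate a genuine analytic obstacle here — the argument is essentially an algebraic rearrangement of an already-proven bound. The one place needing care is the bookkeeping of which quantities are functions of $k$: certifying that $\mathbb{E}[\mathcal{D}^2_{\mathcal{M}_N}]$ and $\mathbb{E}[\ell(\bold{u}_1)]$ are finite and $k$-independent (inherited from the hypotheses of Theorem~\ref{sgd_prune_theorem}), and ensuring $c\,d/m^2$ stays strictly inside $(0,1)$ so that dividing through by $\log(1 - c\,d/m^2)$ is legitimate and correctly reverses the inequality — the easiest spot for a sign slip.
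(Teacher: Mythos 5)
Your proposal is correct and follows essentially the same route as the paper: isolate the rightmost term of the Theorem~\ref{sgd_prune_theorem} bound, require $\left(1 - c\tfrac{d}{m^2}\right)^t = \mathcal{O}(1/k)$ so the $k-1$ factor is neutralized, solve for $t$ by taking logarithms, and note that larger $t$ only helps (giving the $\gtrapprox$ form), with the "otherwise" clause handled by observing the bound is then dominated by a non-decaying term. Your treatment is in fact slightly more careful than the paper's on the bookkeeping—making the inequality direction explicit when dividing by the negative $\log\left(1 - c\tfrac{d}{m^2}\right)$ and noting the $k$-independence of the remaining constants—but it is the same argument.
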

\begin{proof}
    From Theorem \ref{sgd_training_theorem}, one can observe that the term $O\paren{\frac{\log k}{k}\cdot\sqrt{N}}$ dominates only when:
    \[
        \frac{\log k}{k}\cdot\sqrt{N} \geq \paren{1 - C_1\eta N\lambda_{\min}^2}^t\calL\left[f\paren{\cdot,\bm{\Theta}_0}\right].
    \]
    Therefore, to guarantee that $O\paren{\frac{\log k}{k}\cdot\sqrt{N}}$ dominates, $t$ must satisfy:
    \begin{align*}
        t\geq \frac{\log\paren{\frac{\sqrt{N}\log k}{k\calL\left[f\paren{\cdot,\bm{\Theta}_0}\right]}}}{\log \paren{1 - C_1\eta N\lambda_{\min}^2}} \geq \frac{-\log k - \log \calL\left[f\paren{\cdot,\bm{\Theta}_0}\right] }{\log \paren{1 - C_1\eta N\lambda_{\min}^2}} = O\left(\frac{-\log k}{\log\paren{1 - C_1\eta N\lambda_{\min}^2}} \right),
    \end{align*}
    since $\calL\left[f\paren{\cdot,\bm{\Theta}_0}\right]$ is independent of $k$.
\end{proof}

Theorem \ref{sgd_training_theorem} states that \textit{a threshold exists in the number of GD pre-training iterations of a dense network, beyond which subnetworks derived with greedy selection are guaranteed to achieve good training loss}.
Denoting this threshold as $t^\star$. Consider the choice of $\eta$ in Theorem \ref{sgd_prune_theorem}. When $N\gg m$, the step size becomes $\eta = O\paren{\frac{1}{N\lambda_{\max}^2}}$ and in this case $t^* = O\paren{\frac{-\log k}{\log \paren{1 - C_1\cdot\sfrac
{\lambda_{\min}^2}{\lambda_{\max}^2}}}}$. This implies that the threshold of the pre-training steps depends on $\frac{\lambda_{\max}}{\lambda_{\min}}$, the condition number of $\bm{\Psi}$. Since $\bm{\Psi}$ is close to a random Gaussian matrix, $\frac{\lambda_{\max}}{\lambda_{\min}}$ increases as the size of the dataset $m$ increases. This implies that larger datasets require more pre-training for high-performing subnetworks to be discovered. Our finding provides theoretical insight regarding $i)$ how much pre-training is sufficient to discover a subnetwork that performs well and $ii)$ the difficulty of discovering high-performing subnetworks in large-scale experiments (i.e., large datasets require more pre-training).

\section{Related work}
Many variants have been proposed for both structured \citep{eie, conv_filt_prune, net_slimming, provable_subnetworks} and unstructured \citep{rigl, how_lth_wins, lth, deep_comp} pruning.
Generally, structured pruning, which prunes entire channels or neurons of a network instead of individual weights, is considered more practical, as it can achieve speedups without leveraging specialized libraries for sparse computation.
Existing pruning criterion include the norm of weights \citep{conv_filt_prune, net_slimming}, feature reconstruction error \citep{he_pruning, thinet, log_pruning, nisp}, or even gradient-based sensitivity measures \citep{sipping_nn, preserve_grad_flow, discrim_prune}.
While most pruning methodologies perform backward elimination of neurons within the network \citep{lth, stable_lth, net_slimming, rethink_pruning, nisp}, some recent research has focused on forward selection structured pruning strategies \citep{provable_subnetworks, log_pruning, discrim_prune}.
\textit{We adopt greedy forward selection within this work, as it has been previously shown to yield superior performance in comparison to greedy backward elimination, and it is a simple algorithm to apply.} 

Empirical analysis of pruning techniques has inspired associated theoretical developments.
Several works have derived bounds for the performance and size of subnetworks discovered in randomly-initialized networks \citep{pruning_is_all_you_need, log_pruning_is_all_you_need, subset_lth}.
Other theoretical works analyze pruning via greedy forward selection \citep{provable_subnetworks, log_pruning}.
In addition to enabling analysis with respect to subnetwork size, pruning via greedy forward selection was shown to work well in practice for large-scale architectures and datasets.
Some findings from these works apply to randomly-initialized networks given proper assumptions \citep{provable_subnetworks, pruning_is_all_you_need, log_pruning_is_all_you_need, subset_lth}, \emph{but, to the best of our knowledge, no work yet analyzes how different levels of pre-training impact the performance of pruned networks from a theoretical perspective.}

\textit{Our analysis resembles that of the Frank-Wolfe algorithm \citep{frank1956algorithm, frank-wolfe}, a widely-used and simple technique for constrained, convex optimization.}
Recent work has shown that training deep networks with Frank-Wolfe can be made feasible in certain cases despite the non-convex nature of neural network training \citep{bach_fw, frankwolfe_nn}.
Instead of training networks from scratch with Frank-Wolfe, however, we use a Frank-Wolfe-style approach to greedily select neurons from a pre-trained model.
Such a formulation casts structured pruning as convex optimization over a marginal polytope, which can be analyzed similarly to Frank-Wolfe \citep{provable_subnetworks, log_pruning} and loosely approximates networks trained with standard, gradient-based techniques \citep{provable_subnetworks}.
Several distributed variants of the Frank-Wolfe algorithm have been analyzed theoretically \citep{wai2017decentralized, xian2021communication, hou2022distributed}, though our analysis most closely resembles that of \citep{bellet2015distributed}.
Alternative methods of analysis for greedy selection algorithms could also be constructed with the use of sub-modular optimization techniques~\citep{submod_opt}.

Much work has been done to analyze the convergence properties of neural networks trained with gradient-based techniques \citep{provable_overparam, finite_ntk, ntk, one_hid_relu}.
Such convergence rates were originally explored for wide, two-layer neural networks using mean-field analysis techniques \citep{dim_free_mf, mf_2layer}.
Similar techniques were later used to extend such analysis to deeper models \citep{mf_resnet, mf_transformer}.
Generally, recent work on neural network training analysis has led to novel analysis techniques \citep{finite_ntk, ntk}, extensions to alternate optimization methodologies \citep{relu_alt_min, moderate_overparam}, and even generalizations to different architectural components \citep{one_conv_layer, two_layer_relu, one_hid_relu}. 
By adopting and extending such analysis, we aim to simply bridge the gap between the theoretical understanding of neural network training and LTH.

\section{Empirical validation}
\label{sec:experiments}

\begin{figure}
    \centering
    \includegraphics[width=0.85\linewidth]{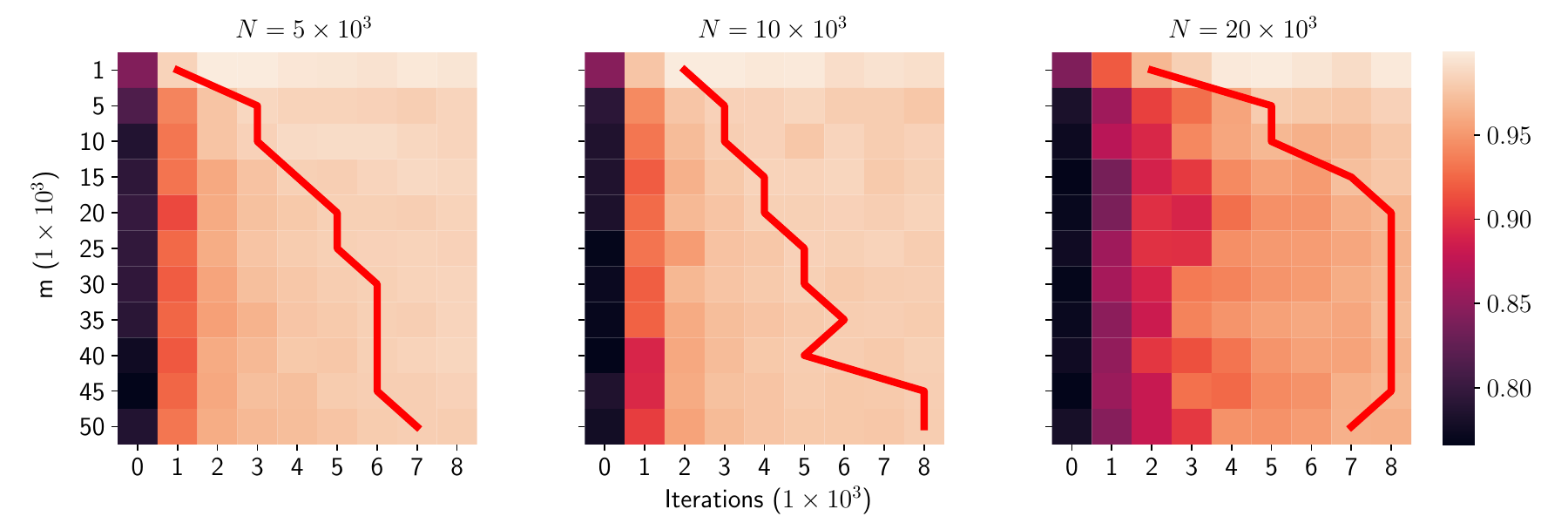}
    \caption{Pruned, two-layer models on MNIST. Sub-plots depict different dense network sizes, while the x and y axis depict the number of pre-training iterations and the sub-dataset size, respectively. Color represents training accuracy, and the red line depicts the point at which subnetworks surpass the performance of the best pruned model on the full dataset for different sub-dataset sizes.}
    \label{fig:prune_map}
\end{figure}

In this section, we empirically validate our theoretical results.
Pruning via greedy forward selection has already been empirically analyzed in previous work.
Therefore, \emph{we focus on an in-depth analysis of the scaling properties of greedy forward selection with respect to the size and complexity of the underlying dataset.}
This experimental setup will better support our theoretical result in Section \ref{sec:train_loss_conv}, which predicts that larger datasets require more pre-training for subnetworks obtained via greedy forward selection to perform well. Experiments are run on an internal cluster with two Nvidia RTX 3090 GPUs using the public implementation of greedy forward selection \citep{gfs_github}. Further experimental results are deferred to Appendix \ref{sec:A_more_exp}.

We perform structured pruning experiments with two-layer networks on MNIST \citep{mnist} by pruning hidden neurons via greedy forward selection.
To match the single output neuron setup described in Section \ref{S:intro}, we binarize MNIST labels by considering all labels less than five as zero and vice versa. 
Our model architecture matches the description in Section \ref{S:intro} with a few minor differences.
Namely, we adopt a ReLU hidden activation and apply a sigmoid output transformation to enable training with binary cross-entropy loss.
Experiments are conducted with several different hidden dimensions (i.e., $N \in \{5\text{K}, 10\text{K}, 20\text{K}\}$).

To study how dataset size impacts subnetwork performance, we construct sub-datasets of sizes 1K to 50K (i.e., in increments of 5K) from the original MNIST dataset by uniformly sampling examples from the 10 original classes.
The two-layer network is pre-trained for 8K iterations in total and pruned every 1K iterations to a size of 200 hidden nodes.
After pruning, the accuracy of the pruned model over the entire training dataset is recorded (i.e., no fine-tuning is performed), allowing the impact of dataset size and pre-training length on subnetwork performance to be observed.
See Figure \ref{fig:prune_map} for these results, which are averaged across three trials.

\medskip
\noindent
\textbf{Discussion.}
The performance of pruned subnetworks in Figure \ref{fig:prune_map} matches the theoretical analysis provided in Section \ref{theory_results} for all different sizes of two-layer networks.
Namely, \textit{as the dataset size increases, so does the amount of pre-training required to produce a high-performing subnetwork.}
To see this, one can track the trajectory of the red line, which traces the point at which the accuracy of the best performing subnetwork for the full dataset is surpassed at each sub-dataset size.
This trajectory clearly illustrates that pre-training requirements for high-performing subnetworks increase with the size of the dataset.
Furthermore, this increase in the amount of required pre-training is seemingly logarithmic, as the trajectory typically plateaus at larger dataset sizes.

Interestingly, despite the use of a small-scale dataset, high-performing subnetworks are never discovered at initialization, revealing that some minimal amount of pre-training is often required to obtain a good subnetwork via greedy forward selection.
Previous work claims that high-performing subnetworks may exist at initialization in theory.
In contrast, our empirical analysis shows that this is not the case even in simple experimental settings.

\section{Conclusion}
\label{S:conclusion}
In this work, we theoretically analyze the impact of dense network pre-training on the performance of a pruned subnetwork obtained via greedy forward selection.
By expressing pruned network loss with respect to the number of gradient descent iterations performed on its associated dense network, we discover a threshold in the number of pre-training iterations beyond which a pruned subnetwork achieves good training loss.
Our theoretical result implies the dependency of this threshold on the size of the dataset, which offers intuition into the early-bird ticket phenomenon and the difficulty of replicating pruning experiments at scale. 
We also provide empirical verification of our theoretical findings over several datasets and network architectures, showing that the amount of pre-training required to discover a winning ticket is consistently dependent on the size of the underlying dataset.
Other than the materials discussed in the main text, we also included in the appendix:
\begin{itemize}
    \item A distributed version of the greedy forward selection algorithm and its empirical performance evaluation (See Appendix \ref{sec:A_dist}).
    \item Additional experimental result on applying greedy forward selection to the pruning of deep neural networks. (See Appendix \ref{sec:A_more_exp}).
\end{itemize}
Several open problems remain, such as extending our analysis beyond two-layer networks, deriving generalization bounds for subnetworks pruned with greedy forward selection, or even using our theoretical results to discover new heuristic methods for identifying early-bird tickets in practice.

\bibliographystyle{plainnat}
\bibliography{biblio}
\clearpage
\appendix
\section{Proof of Lemma \ref{lem:diameter_interpret}}
\label{sec:A_proof_diameter_interpret}
\begin{proof}
    We will proceed with the proof by induction on the scope of $\bm{\Phi}_i$'s that $\bfu$ and $\bfv$ are composed of. To be more specific, we will show that $\norm{\bfu - \bfv}_2\leq \mathcal{D}'$ for all $\bfu, \bfv\in\texttt{Conv}\left\{\bm{\Phi}_i; i\in[n]\right\}$ for $n = 1,\dots, N$. For the base case, we consider $n = 1$. In this case, we must have that $\bfu = \bm{\Phi}_1 = \bfv$. Therefore, $\norm{\bfu - \bfv}_2 = 0\leq \mathcal{D}'$. For the inductive case, suppose that our claim is true for $n = 1,\dots, n'$. We shall prove that it is true for $n = n'+1$. In this case,
    \[
        \bfu = \sum_{i=1}^{n'+1}\gamma_i\bm{\Phi}_i;\quad \bfv = \sum_{i=1}^{n'+1}\gamma_i'\bm{\Phi}_i
    \]
    for some $\{\gamma_i\}_{i=1}^{n'+1}$ and $\{\gamma_i'\}_{i=1}^{n'+1}$ satisfying $\sum_{i=1}^{n'+1}\gamma_i = \sum_{i=1}^{n'+1}\gamma_i' = 1$ and $\gamma_i\geq 0, \gamma_i'\geq0$ for all $i\in[n'+1]$. Without loss of generality, let $\gamma_{n'+1}\geq \gamma_{n'+1}'$. If $\gamma_{n'+1} = 1$, then we have
    \begin{align*}
        \norm{\bfu - \bfv}_2 & = \norm{\bm{\Phi}_{n'+1} - \sum_{i=1}^{n'+1}\gamma'_i\bm{\Phi}_i}_2\\
        & =\norm{\sum_{i=1}^{n'}\gamma'_i\paren{\bm{\Phi}_{n'+1}- \bm{\Phi}_i}}_2\\
        & \leq \sum_{i=1}^{n'}\gamma'_i\norm{\bm{\Phi}_{n'+1}- \bm{\Phi}_i}_2\\
        & \leq \mathcal{D}'\sum_{i=1}^{n'}\gamma'_i\\
        & \leq \mathcal{D}'
    \end{align*}
    Otherwise, we can suppose $\gamma_{n'+1}' \leq \gamma_{n'+1} < 1$. In this case, we can write $\bfu$ and $\bfv$ as
    \begin{align*}
        \bfu = \gamma_{n'+1}\bm{\Phi}_{n'+1} + \paren{1 - \gamma_{n'+1}}\sum_{i=1}^{n'}\frac{\gamma_i}{1 - \gamma_{n'+1}}\bm{\Phi}_i = \gamma_{n'+1}\bm{\Phi}_{n'+1} + \paren{1 - \gamma_{n'+1}}\bfu'\\
        \bfv = \gamma_{n'+1}'\bm{\Phi}_{n'+1} + \paren{1 - \gamma_{n'+1}'}\sum_{i=1}^{n'}\frac{\gamma_i'
        }{1 - \gamma_{n'+1}'}\bm{\Phi}_i = \gamma_{n'+1}'\bm{\Phi}_{n'+1} + \paren{1 - \gamma_{n'+1}'}\bfv'
    \end{align*}
    for some $\bfu',\bfv'\in\texttt{Conv}\left\{\bm{\Phi}_i:i\in[n']\right\}$. Then by the inductive hypothesis, we have $\norm{\bfu' - \bfv'}_2\leq \mathcal{D}'$. Thus we have
    \begin{align*}
        \norm{\bfu - \bfv}_2 & = \norm{\paren{\gamma_{n'+1} - \gamma_{n'+1}'}\paren{\bm{\Phi}_{n'+1}-\bfv'} + \paren{1 - \gamma_{n'+1}}\paren{\bfu' - \bfv'}}_2\\
        & \leq \paren{\gamma_{n'+1} - \gamma_{n'+1}'}\norm{\bm{\Phi}_{n'+1} - \bfv'}_2 +  \paren{1 - \gamma_{n'+1}}\norm{\bfu' - \bfv'}_2\\
        & \leq \paren{\gamma_{n'+1} - \gamma_{n'+1}'}\norm{\sum_{i=1}^{n'}\frac{\gamma'_i}{1 - \gamma'_{n'+1}}\paren{\bm{\Phi}_{n'+1} - \bfv'}}_2 + \paren{1 - \gamma_{n'+1}}\mathcal{D}'\\
        & \leq \paren{\gamma_{n'+1} - \gamma_{n'+1}'}\sum_{i=1}^{n'}\frac{\gamma'_i}{1 - \gamma'_{n'+1}}\norm{\bm{\Phi}_{n'+1} - \bfv'}_2  + \paren{1 - \gamma_{n'+1}}\mathcal{D}'\\
        & \leq \paren{\gamma_{n'+1} - \gamma_{n'+1}'}\mathcal{D}'\sum_{i=1}^{n'}\frac{\gamma'_i}{1 - \gamma'_{n'+1}}+ \paren{1 - \gamma_{n'+1}}\mathcal{D}'\\
        & = \paren{1 - \gamma'_{n'+1}}\mathcal{D}'\\
        & \leq \mathcal{D}'
    \end{align*}
    This finishes the inductive step and thus finishes the proof.
\end{proof}
\section{Proof of Theorem \ref{sgd_prune_theorem}}
\label{sec:A_train_loss_conv_proof}
Our proof utilizes the result from \cite{song2021subquadratic}. We first revisit the scheme and theoretical result discussed in \cite{song2021subquadratic}. After that, we will interpret their result in the scenario of our consideration.
\subsection{Existing Result}
\cite{song2021subquadratic} considers using gradient descent to minimize the objective $h = \ell\circ \hat{f}$ where $\ell:\R^{\hat{d}_{\text{out}}}\rightarrow \R$ is the loss function and $\hat{f}:\R^{\hat{d}_{\text{in}}}\rightarrow \R^{\hat{d}_{\text{out}}}$ is the function of the model defined over $m$ input-output pairs. In particular, \cite{song2021subquadratic} makes the following assumption to show that gradient descent converges
\begin{asump}
    \label{asump:prev_gd}
    (Gradient Descent)
    \begin{itemize}
    \item $\ell$ is twice differentiable, satisfies $\alpha_{\ell}$-PL condition, and  is $\beta_{\ell}$-smooth.
    \item $\hat{f}$ is twice differentiable, $\beta_{\hat{f}}$-smooth.
    \end{itemize}
\end{asump}
Building upon Assumption \ref{asump:prev_gd}, they have the following Theorem
\begin{theorem}
    \label{theo:prev_gd}
    (Theorem 2 in \cite{song2021subquadratic}). Assume that Assumption \ref{asump:prev_gd} holds. Let $\bfw_0\in\R^{\hat{d}_{\text{in}}}$ satisfy
    \[
        \mu_{\hat{f}}\leq \sigma_{\min}\paren{\nabla\hat{f}\paren{\bfw_0}}\leq \sigma_{\max}\paren{\nabla\hat{f}\paren{\bfw_0}} \leq \nu_{\hat{f}}
    \]
    and $h(\bfw_0) = O\paren{\frac{\alpha_{\ell}\mu_{\hat{f}}^6}{\beta_{\hat{f}}^2\nu_{\hat{f}}^2}}$.
    Then the sequence $\left\{\bfw_t\right\}_{t=0}^\infty$ generated by
    \[
        \bfw_{t+1} = \bfw_t - \eta\nabla h\paren{\bfw_t};\quad \eta = O\paren{\frac{1}{\beta_{\hat{f}}\norm{\nabla\ell\paren{\hat{f}\paren{\bfw_0}}}_2 + \beta_{\ell}\paren{\mu_{\hat{f}}^2 + \nu_{\hat{f}}^2}}}
    \]
    satisfies the following convergence property
    \[
        h\paren{\bfw_{t+1}} \leq \paren{1 - C\eta\alpha_{\ell}\mu_{\hat{f}}^2}h\paren{\bfw_t}
    \]
    for some constant $C > 0$.
\end{theorem}
To extend this result to the training of shallow neural networks, they consider the following parameterization of the two-layer neural network and the mean-squared error loss
\[
    \hat{f}\paren{\bm{\Theta}} = \bfV\sigma_+\paren{\bfW\bfX};\quad \ell\paren{\hat{\bfY}} = \frac{1}{2}\norm{\hat{\bfY} - \bfY}_F^2
\]
where $\bfX\in\R^{d\times m}$ is the matrix consisting of the input vectors, $\bfY\in\R^{d'\times m}$ is the matrix consisting of the label vectors, $\bfW\in\R^{N\times d}$ is the first layer weights, $\bfV\in\R^{d'\times N}$ is the second layer weights, $\bm{\Theta} = \left\{\bfW,\bfV\right\}$ is the collection of weights, and $\sigma_+\paren{\cdot}$ is the entry-wise activation function. Within this setup, they make the following assumption about the neural network
\begin{asump}
    \label{asump:prev_nn}
    (Neural Network)
    \begin{itemize}
        \item At initialization, the entries of the weight satisfies $\bfW_{ij}\sim\mathcal{N}\paren{0, \omega_1^2}$ and $\bfV\sim\mathcal{N}\paren{0,\omega_2^2}$ satisfying $\omega_1\omega_2 = O\paren{\frac{1}{\sqrt{dN}}}$.
        \item $\sigma_+\paren{\cdot}$ is twice differentiable and satisfies $\max\left\{\left|\sigma_+'\paren{\cdot}\right|, \left|\sigma_+''\paren{\cdot}\right|\right\}\leq \delta$
        \item There exists $r_1, r_2$ such that $\tau^{r_1}\left|\sigma_+\paren{a}\right|\leq \left|\sigma_+\paren{\tau a}\right|\leq \tau^{r_2}\left|\sigma_+\paren{a}\right|$.
        \item For all $k$, it holds that $\sigma_{\max}\paren{\bfV_k} = O(1)$.\footnote{This assumption can be eliminated by assuming a larger overparameterization.}
    \end{itemize}
\end{asump}
With Assumption \ref{asump:prev_nn}, they proved the following result:
\begin{theorem}
    \label{theo:nn_results}
    Suppose that Assumption \ref{asump:prev_nn} holds. If $N = \paren{m^{\frac{3}{2}}}$, then with high probability over the initialization, we have
    \begin{itemize}
        \item (Lemma 18 in \cite{song2021subquadratic}) $\mu_{\hat{f}} := \sigma_{\min}\paren{\sigma_+\paren{\bfW\bfX}} \leq \sigma_{\min}\paren{\nabla \hat{f}\paren{\bm{\Theta}_0}}$.
        \item (Lemma 18 of \cite{song2021subquadratic}) $\nu_{\hat{f}} := c_0\delta\sigma_{\max}\paren{\bfX} + \sigma_{\max}\paren{\sigma_+\paren{\bfW\bfX}} \geq \sigma_{\max}\paren{\nabla \hat{f}\paren{\bm{\Theta}_0}}$ for some constant $c_0 > 0$.
        \item (Lemma 18 in \cite{song2021subquadratic}) $\beta_{\hat{f}} =c_1\delta\sigma_{\max}\paren{\bfX}$ for some constant $c_1 > 0$.
        \item (Theorem 3 in \cite{song2021subquadratic}) $\bm{\Theta}_0$ satisfies $h(\bm{\Theta}_0) = O\paren{\frac{\alpha_{\ell}\mu_{\hat{f}}^6}{\beta_{\hat{f}}^2\nu_{\hat{f}}^2}}$ with $\mu_{\hat{f}}$ and $\nu_{\hat{f}}$ defined above.
    \end{itemize}
\end{theorem}
Combining Theorem \ref{theo:nn_results} and Theorem \ref{theo:prev_gd}, they obtain a training loss convergence for the two-layer neural network.

\subsection{Proving Theorem \ref{sgd_prune_theorem}}
Recall that given an input $\bfx$, our neural network is defined as 
\[
    f\paren{\bfx,\bm{\Theta}} = \frac{1}{N}\sum_{i=1}^n\sigma\paren{\bfx,\bm{\theta}_i} = \sum_{i=1}^Nb_i\sigma_+\paren{\bfa_i^\top\bfx}
\]
Indeed, generalizing to a fixed matrix of input vectors $\bfX$ and outputs $\bfY$, our neural network can be written as
\[
    f\paren{\bm{\Theta}} = \bfb^\top\sigma_+\paren{\bfA\bfX}\in\R^{1\times m};\quad\calL\left[f\paren{\cdot,\bm{\Theta}}\right] = \frac{1}{2}\norm{f\paren{\bm{\Theta}} - \bfy^\top}_F^2
\]
Therefore, our neural network setup is exactly the same as \cite{song2021subquadratic} by letting the output dimension $d' = 1$. Moreover,  by assuming our Assumption \ref{main_assm1} and \ref{main_assm2}, Assumption \ref{asump:prev_nn} is satisfied with $\omega_a = m^{-\frac{1}{2}}N^{-\frac{3}{4}}$ and $\omega_b = \frac{\sqrt{m}}{\sqrt{d}}$. In this way, we have $\omega_a\omega_b = d^{-\frac{1}{2}}N^{-\frac{3}{4}} = O\paren{\frac{1}{\sqrt{dN}}}$. Thus, interpreting Theorem \ref{theo:nn_results} in our setting, we have
\begin{theorem}
    \label{theo:nn_results_our}
    Suppose that Assumption \ref{main_assm1} and Assumption \ref{main_assm2} holds. If $N = \paren{m^{\frac{3}{2}}}$, then with high probability over the initialization, we have
    \begin{itemize}
        \item $\mu_{f} := \sqrt{N}\lambda_{\min} \leq \sigma_{\min}\paren{\nabla \hat{f}\paren{\bm{\Theta}_0}}$.
        \item $\nu_{f} := c_0\delta\sqrt{m} + \sqrt{N}\lambda_{\max} \geq \sigma_{\max}\paren{\nabla \hat{f}\paren{\bm{\Theta}_0}}$ for some constant $c_0 > 0$.
        \item $\beta_f := c_1\delta\sqrt{m}$ for some constant $c_1 > 0$.
        \item $\bm{\Theta}_0$ satisfies $\calL\left[f\paren{\cdot,\bm{\Theta}_0}\right] = O\paren{\frac{\alpha_{\ell}\mu_{f}^6}{\beta_{f}^2\nu_{f}^2}}$ with $\mu_{f}$ and $\nu_{f}$ defined above.
    \end{itemize}
\end{theorem}
\begin{proof}
To start, notice that
\begin{equation}
    \label{eq:sig_max_X}
    \sigma_{\max}\paren{\bfX} \leq \norm{\bfX}_F = \paren{\sum_{j=1}^m\norm{\bfx_j}_2^2}^{\frac{1}{2}} = \sqrt{m}
\end{equation}
For Bullet 1, we recall that $\lambda_{\min}$ is defined as $\lambda_{\min} = \frac{1}{\sqrt{N}}\sigma_{\min}\paren{\bfA_0\bfX}$ in Definition \ref{def:mu_0}. Combining with the first bullet point in Theorem \ref{theo:nn_results} gives Bullet 1. For Bullet 2, we recall that $\lambda_{\max}$ is defined as $\lambda_{\max} = \frac{1}{\sqrt{N}}\sigma_{\max}\paren{\bfA_0\bfX}$ in Definition \ref{def:mu_0}. Combining with the second bullet point in Theorem \ref{theo:nn_results} and plugging in (\ref{eq:sig_max_X}) gives the desired result. For Bullet 3, we use the third bullet in Theorem \ref{theo:nn_results} and plug-in (\ref{eq:sig_max_X}). Lastly, the fourth bullet directly follows from Theorem \ref{theo:nn_results}.
\end{proof}
With Theorem \ref{theo:nn_results_our}, we are able to guarantee the convergence of training loss in our scenario.
\begin{theorem}
    \label{theo:nn_converge}
    Suppose that Assumption \ref{main_assm1} and Assumption \ref{main_assm2} holds. If $N = \paren{m^{\frac{3}{2}}}$ and $\eta = O\paren{\frac{1}{\sqrt{m}\paren{1+\calL\left[f\paren{\cdot,\bm{\Theta}_0}\right]^{\frac{1}{2}}} + N\lambda_{\max}^2}}$, then with high probability over the initialization, we have
    \[
        \calL\left[f\paren{\cdot,\bm{\Theta}_{t+1}}\right] \leq \paren{1 - C\eta N\lambda_{\min}^2}\calL\left[f\paren{\cdot,\bm{\Theta}_t}\right]
    \]
\end{theorem}
\begin{proof}
    We wish to apply Theorem \ref{theo:prev_gd}. By utilizing Theorem \ref{theo:nn_results_our}, it remains to check the requirement of the step size $\eta$ and compute the convergence rate explicitly. Notice that in Theorem \ref{theo:prev_gd}, $\eta$ is given by
    \[
        \eta = O\paren{\frac{1}{\beta_f\norm{\nabla\ell\paren{f(\bfw_0}}_2 + \beta_{\ell}\paren{\mu_f^2 + \nu_f^2}}}
    \]
    By our choice of $\ell$, we have that $\norm{\nabla\ell\paren{f(\bfw_0}}_2 = \calL\left[f\paren{\cdot,\bm{\Theta}_0}\right]^{\frac{1}{2}}$ and $\beta_{\ell} = 1$. Moreover, using $\mu_f\leq \nu_f$, we have
    \begin{align*}
        \eta = O\paren{\frac{1}{\beta_f\calL\left[f\paren{\cdot,\bm{\Theta}_0}\right]^{\frac{1}{2}} + 2\nu_f^2}} = O\paren{\frac{1}{\sqrt{m}\paren{1+\calL\left[f\paren{\cdot,\bm{\Theta}_0}\right]^{\frac{1}{2}}} + N\lambda_{\max}^2}}
    \end{align*}
    by plugging in the value of $\beta_f$ and $\nu_f$ from Theorem \ref{theo:nn_results_our} and omitting the constants. Moreover, for the choice of $\ell$ we have $\alpha_{\ell} = 1$. Then the convergence rate reduces to
    \begin{align*}
        C\eta\alpha_{\ell}\mu_f^2 = C\eta N\lambda_{\min}^2
    \end{align*}
\end{proof}
Notice that a direct consequence of Theorem \ref{theo:nn_converge} is that
\begin{equation}
    \label{eq:theo6.1}
    \calL\left[f\paren{\cdot,\bm{\Theta}_t}\right] \leq  \paren{1 - C\eta N\lambda_{\min}^2}^t\calL\left[f\paren{\cdot,\bm{\Theta}_0}\right]
\end{equation}
When pruning is performed after $t$ iterations of gradient descent, we can substitute $\bm{\Theta}$ with $\bm{\Theta}_t$ in Lemma \ref{nn_loss_theorem} to get that
\[
    \calL\left[f_{\mathcal{S}_k}\paren{\cdot,\bm{\Theta}}\right] \leq \frac{1}{k}\calL\left[f_{\mathcal{S}_1}\paren{\cdot,\bm{\Theta}}\right] + \frac{1 + \log k}{2k}\mathcal{D}_{\mathcal{M}_N}^2 + \frac{k-1}{k}\calL\left[f\paren{\cdot,\bm{\Theta}_t}\right]
\]
Then, we use $\frac{k-1}{k}\leq 1$ and simply apply the upper bound of $\calL\left[f\paren{\cdot,\bm{\Theta}_t}\right]$ in (\ref{eq:theo6.1}) to get that
\[
    \calL\left[f_{\mathcal{S}_k}\paren{\cdot,\bm{\Theta}}\right] \leq \frac{1}{k}\calL\left[f_{\mathcal{S}_1}\paren{\cdot,\bm{\Theta}}\right] + \frac{1 + \log k}{2k}\mathcal{D}_{\mathcal{M}_N}^2 + \paren{1 - C\eta N\lambda_{\min}^2}^t\calL\left[f\paren{\cdot,\bm{\Theta}_0}\right]
\]

\section{Numerical Experimental Verification}
\label{sec:A_exp1}
In Figure \ref{fig:lambda_min_plot} we randomly generated $\bm{\Psi}$ matrix in the following way: we first generating $\bfa_1,\dots\bfa_N$ with $\bfa_i\sim\mathcal{N}\paren{0, \frac{1}{\sqrt{d}}\mathbf{I}_d}$, then, we generate $\bfx_1,\dots,\bfx_m$ with $\bfx_j\sim\mathcal{N}\paren{0, \bfI_d}$ and normalize each $\bfx_j$. After that, we compute $\bm{\Psi}$ by $\bm{\Psi}_{ij} = \sigma_+\paren{\bfa_i^\top\bfx_j}$ to simulate the hidden neuron output at initialization. For each $N, d$, and $m$, we generate 10 of such $\bm{\Psi}$ matrix and record the mean and standard deviation of their minimum singular values. Figure \ref{fig:lambda_min_plot} plots $\sigma_{\min}\paren{\bm{\Psi}}$ for different $N, d$, and $m$. For each $d$ and $m$, we also plotted the curve $O\paren{\sqrt{N}}$ to compare with the curve $\sigma_{\min}\paren{\bm{\Psi}}$. We can observe that the two curves almost overlaps, implying that $\sigma_{\min}\paren{\bm{\Psi}}$ indeed scale with $O\paren{\sqrt{N}}$.

In Figure \ref{fig:kappa_plot}, we conduct a simple experimental verification of the condition number of standard Gaussian random matrices of shape $N\times m$, where the condition number $\kappa$ of a matrix $\bfM$ is defined as $\kappa = \frac{\sigma_{\max}\paren{\bfM}}{\sigma_{\min}\paren{\bfM}}$. We can observe that $\kappa^{-1}$ decreases as $m$ increases.
\begin{figure}[ht!]
    \centering
    \begin{subfigure}[b]{0.3\textwidth}
         \centering
         \includegraphics[width=\textwidth]{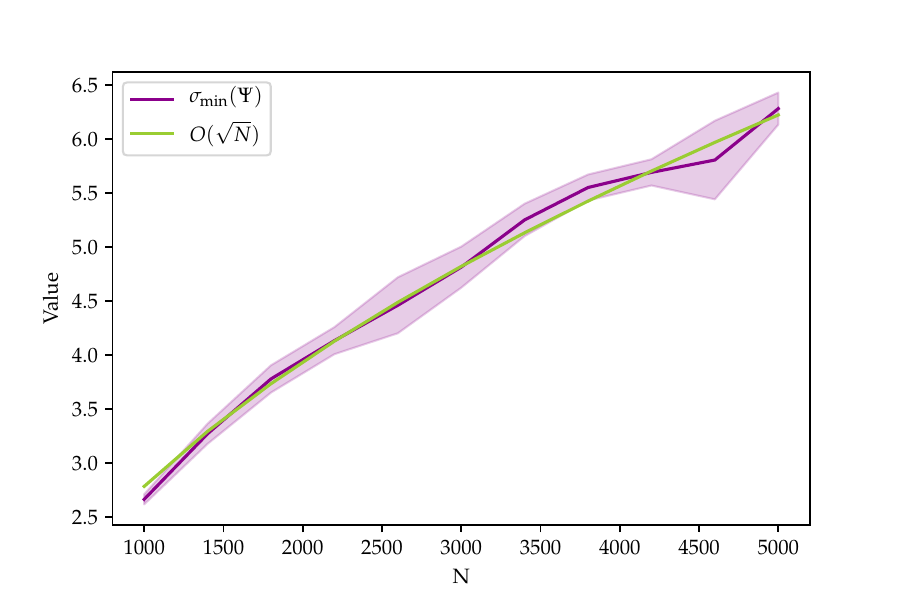}
         \caption{$d=200, m=50$}
     \end{subfigure}
     \begin{subfigure}[b]{0.3\textwidth}
         \centering
         \includegraphics[width=\textwidth]{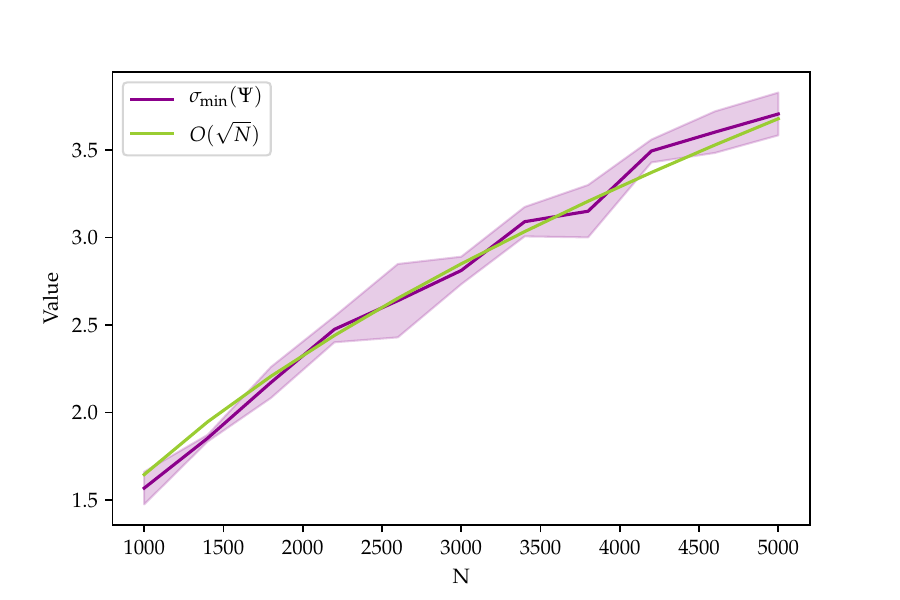}
         \caption{$d=200, m=100$}
     \end{subfigure}
     \begin{subfigure}[b]{0.3\textwidth}
         \centering
         \includegraphics[width=\textwidth]{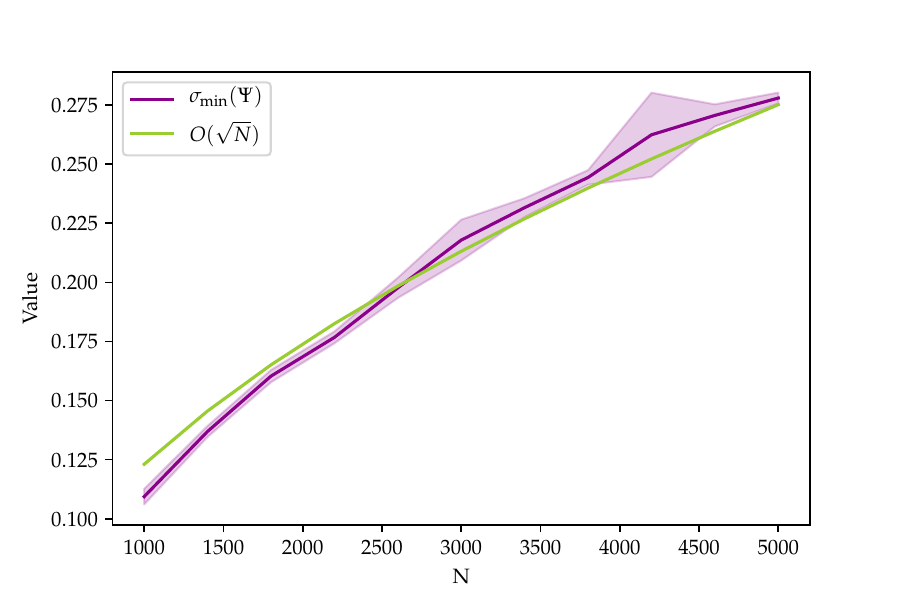}
         \caption{$d=200, m=200$}
     \end{subfigure}
     \begin{subfigure}[b]{0.3\textwidth}
         \centering
         \includegraphics[width=\textwidth]{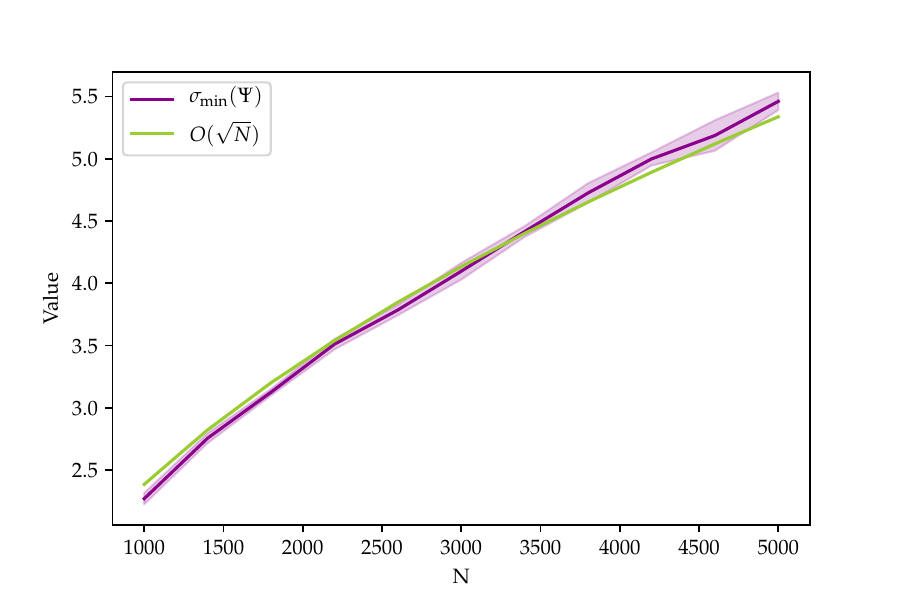}
         \caption{$d=500, m=50$}
     \end{subfigure}
     \begin{subfigure}[b]{0.3\textwidth}
         \centering
         \includegraphics[width=\textwidth]{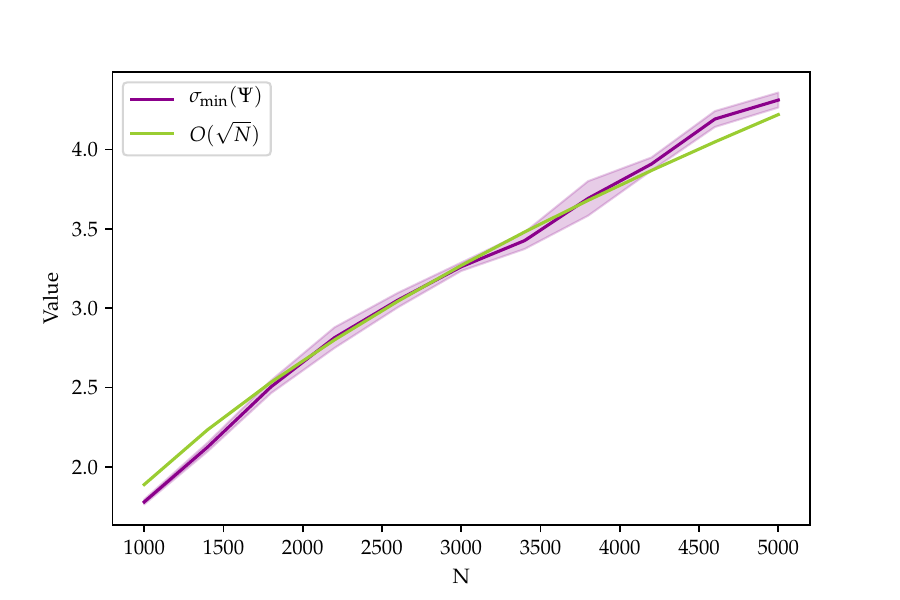}
         \caption{$d=500, m=100$}
     \end{subfigure}
     \begin{subfigure}[b]{0.3\textwidth}
         \centering
         \includegraphics[width=\textwidth]{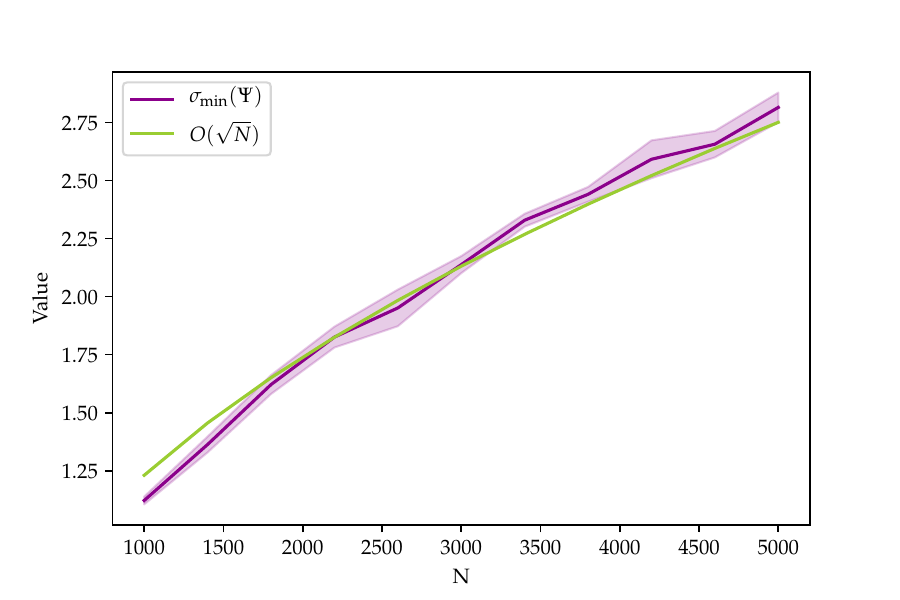}
         \caption{$d=500, m=200$}
     \end{subfigure}
    \caption{Plotting $\sigma_{\min}\paren{\bm{\Psi}}$ versus $N$ with different $d,m$. For reference, we also plotted $O(\sqrt{N})$.}
    \label{fig:lambda_min_plot}
\end{figure}

\begin{figure}
    \centering
    \includegraphics[width=0.6\textwidth]{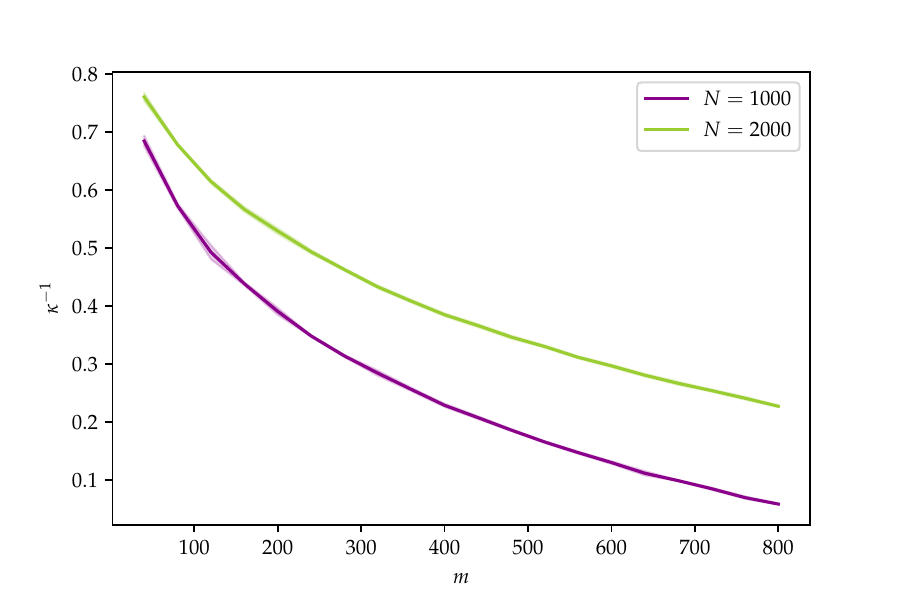}
    \caption{Inverse of the condition number of a standard Gaussian random matrix with shape $N\times m$}
    \label{fig:kappa_plot}
\end{figure}
\section{Distributed Greedy Forward Selection}
\label{sec:A_dist}
We propose a distributed variant of greedy forward selection that can parallelize and accelerate the pruning process across multiple compute sites.
Distributed greedy forward selection is shown to achieve identical theoretical guarantees compared to the centralized variant and used to accelerate experiments with greedy forward selection within this work. 

For the distributed variant of greedy forward selection, we consider local compute nodes $\mathcal{V} = \{v_i\}_{i=1}^V$, which communicate according to an undirected connected graph $G = (\mathcal{V}, \mathcal{E})$.
Here, $\mathcal{E}$ is a set of edges, where $|\mathcal{E}| = E$ and $(v_i, v_j) \in \mathcal{E}$ indicates that nodes $v_i$ and $v_j$ can communicate with each other.
For simplicity, our analysis assumes synchronous updates, that the network has no latency, and that each node has an identical copy of the data $D$. 
\begin{algorithm}[H]
    \centering
    \caption{Distributed Greedy Forward Selection for Two-Layer Networks}
    \label{A:greedy forward selection_distrib}
    \begin{algorithmic}[1]
        \State $\bfz_0^{(j)} := \bm{0},~~\forall j \in [V]$
        \For{$k := 1, 2, \dots$}
            \State \textcolor{myblue2}{\# \textbf{Step I}: compute a local estimate of the next iterate}
            \For{$v_i \in \mathcal{V}$}
                 \State $\bfq_k^{(i)} := \argmin\limits_{\bfq \in \texttt{Vert}(\mathcal{M}_N^{(i)})} \ell \paren{\frac{1}{k}\paren{\bfz_{k - 1} + \bfq}}$
                 \State $\bfz_k^{(i)} := \bfz_{k-1} + \bfq_{k}^{(i)}$
                 \State $\textbf{Broadcast:}~~L^{(i)} := \ell\paren{\bfz_k^{(i)}}$ 
            \EndFor
    
            \State \textcolor{myblue2}{\# \textbf{Step II}: determine and broadcast the best local iterate}
            \For{$v_i \in \mathcal{V}$}
                 \State $i_k := \argmin\limits_{i \in [V]} L^{(i)}$ 
                 \If{$i_k = i$}
                    \State $\textbf{Broadcast:}~~\bfz_k := \bfz_k^{(i)}$
                 \EndIf
            \EndFor
        
            \State \textcolor{myblue2}{\# \textbf{Step III}: update the current, global iterate}
            \For{$v_i \in \mathcal{V}$}
                 \State $\bfu_k := \frac{1}{k}\bfz_k$
            \EndFor
        \EndFor
        \State \texttt{Stopping Criterion:} $\ell\paren{\bfu_k} \leq \epsilon$
    \end{algorithmic}
\end{algorithm}

Recall that the set of neurons considered by greedy forward selection is given by $\texttt{Vert}(\mathcal{M}_N) = \{\boldsymbol{\Phi}_i: i \in [N]\}$.
In the distributed setting, we assume that the weights associated with each neuron are uniformly and disjointly partitioned across compute sites.
More formally, for $j \in [V]$, we define $\mathcal{A}^{(j)}$ as the indices of neurons on $v_j$ and $\bold{\Theta}^{(j)} = \{\bold{\theta}_i: i \in \mathcal{A}^{(j)} \}$ as the neuron weights contained on $v_j$.
Going further, we consider $\{\boldsymbol{\Phi}_i: i \in \mathcal{A}^{(j)}\}$ and denote the convex hull over this subset of neuron activations as $\mathcal{M}_N^{(j)}$.
We assume that $\mathcal{A}^{(j)} \bigcap \mathcal{A}^{(k)} = \varnothing$ for $j \neq k$ and that $\bigcup_{j=1}^{V} \mathcal{A}^{(j)} = [N]$.

Algorithm \ref{A:greedy forward selection_distrib} aims to solve the main objective in this work, but in the distributed setting.
We maintain a global set of active neurons throughout pruning that is shared across compute nodes, denoted as $\mathcal{S}_k$ at pruning iteration $k$.
At each pruning iteration $k$, we perform a local search over the neurons on each $v_j \in \mathcal{V}$, then aggregate the results of these local searches and add a single neuron (i.e., the best option found by any local search) into the global set.
Intuitively, Algorithm \ref{A:greedy forward selection_distrib} adopts the same greedy forward selection process from Algorithm \ref{A:greedy forward selection_central}, but parallelizes it across compute nodes. 

\label{S:distrib_speedup}
\begin{figure}
    \centering
    \includegraphics[width=0.5\linewidth]{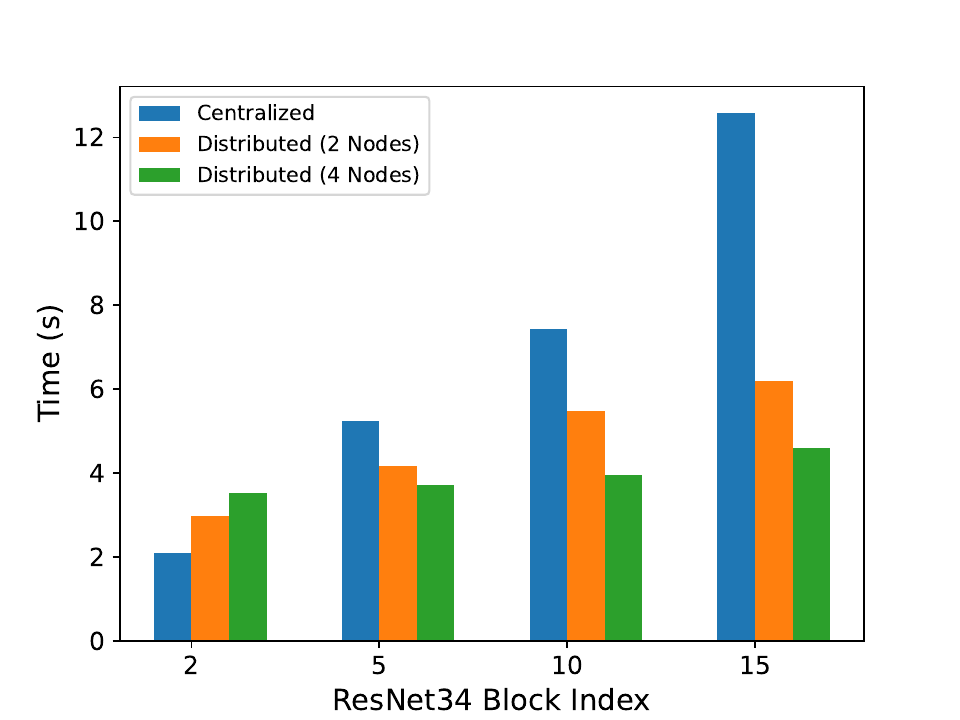}
    \caption{Pruning time for centralized and distributed greedy forward selection applied to different blocks of a ResNet34 architecture on ImageNet.}
    \label{fig:distrib_pruning}
\end{figure}

\medskip
\noindent \textbf{Empirical validation of the distributed implementation.}
The centralized and distributed variants of greedy forward selection achieve identical convergence rates with respect to the number of pruning iterations.
Despite its impressive empirical results, one of the major drawbacks of greedy forward selection is that it is slow and computationally expensive compared to heuristic techniques. 
Distributed greedy forward selection mitigates this problem by parallelizing the pruning process across multiple compute nodes with minimal communication overhead.

To practically examine the acceleration provided by distributed greedy forward selection, we prune a ResNet34 architecture \citep{resnet} on the ImageNet dataset and measure the pruning time for each layer with different greedy forward selection variants.
In particular, we select four blocks from the ResNet34 architecture with different spatial and channel dimensions.
The time taken to prune each of these blocks is shown in Figure \ref{fig:distrib_pruning}. All experiments are run on an internal cluster with two Nvidia RTX 3090 GPUs using the public implementation of greedy forward selection \citep{gfs_github}.

Distributed greedy forward selection (using either two or four GPUs) significantly accelerates the pruning process for nearly all blocks within the ResNet.
Notably, no speedup is observed for the second block because earlier ResNet layers have fewer channels to be considered by greedy forward selection.
As the channel dimension increases in later layers, distributed greedy forward selection yields a significant speedup in the pruning process.
Given that the convergence guarantees of distributed greedy forward selection are identical to those of the centralized variant, we adopt the distributed algorithm to improve efficiency in the majority of our large-scale pruning experiments.

\section{Application to deeper neural architectures}
\label{sec:A_more_exp}
We perform structured pruning experiments (i.e., channel-based pruning) using ResNet34 \citep{resnet} and MobileNetV2 \citep{mobilenetv2} architectures on CIFAR10 and ImageNet \citep{cifar10, imagenet}.
We adopt the same generalization of greedy forward selection to pruning deep networks as described in \citep{provable_subnetworks} and use $\epsilon$ to denote our stopping criterion. 
We follow the three-stage methodology---pre-training, pruning, and fine-tuning---and modify both the size of the underlying dataset and the amount of pre-training prior to pruning to examine their impact on subnetwork performance.
Standard data augmentation and splits are adopted for both datasets.

\begin{table}[]
    \centering
    \begin{tabular}{ccccccc}
    \toprule
        \multirow{2}{*}{\textbf{Model}} & \multirow{2}{*}{\textbf{Dataset Size}}& \multicolumn{4}{c}{\textbf{Pruned Accuracy}} & \multirow{2}{*}{\textbf{Dense Accuracy}}\\
        \cmidrule(l){3-6}
         & & \textbf{20K It.} & \textbf{40K It.} & \textbf{60K It.} & \textbf{80K It.} &\\
         \midrule
         \multirow{3}{*}{MobileNetV2} & 10K & 82.32 & 86.18 & 86.11 & 86.09 & 83.13\\
         & 30K & 80.19 & 87.79 & 88.38 & 88.67 & 87.62\\ 
         & 50K & 86.71 & 88.33 & 91.79 & 91.77 & 91.44\\
         \midrule
         \multirow{3}{*}{ResNet34} & 10K & 75.29 & 85.47 & 85.56 & 85.01 & 85.23\\
         & 30K & 84.06 & 91.59 & 92.31 & 92.15 & 92.14\\
         & 50K & 89.79 & 91.34 & 94.28 & 94.23 & 94.18\\
         \bottomrule
    \end{tabular}
    \caption{CIFAR10 test accuracy for subnetworks derived from dense networks with varying pre-training amounts (i.e., number of training iterations listed in top row) and sub-dataset sizes.}
    \label{tab:cifar10_prune}
\end{table}

\medskip
\noindent
\textbf{CIFAR10}.
Three CIFAR10 sub-datasets of size 10K, 30K, and 50K (i.e., full dataset) are created using uniform sampling across classes.
Pre-training is conducted for 80K iterations using SGD with momentum and a cosine learning rate decay schedule starting at 0.1.
We use a batch size of 128 and weight decay of $5 \cdot 10^{-4}$.\footnote{Our pre-training settings are adopted from a popular repository for the CIFAR10 dataset \citep{cifar_github}.}
The dense model is independently pruned every 20K iterations, and subnetworks are fine-tuned for 2500 iterations with an intial learning rate of 0.01 prior to being evaluated.
We adopt $\epsilon = 0.02$ and $\epsilon = 0.05$ for MobileNet-V2 and ResNet34, respecitvely, yielding subnetworks with a 40\% decrease in FLOPS and 20\% decrease in model parameters in comparison to the dense model.\footnote{These settings are derived using a grid search over values of $\epsilon$ and the learning rate with performance measured over a hold-out validation set; see Appendix \ref{ls_exp_details}.}

The results of these experiments are presented in Table \ref{tab:cifar10_prune}.
The amount of training required to discover a high-performing subnetwork consistently increases with the size of the dataset.
For example, with MobileNetV2, a winning ticket is discovered on the 10K and 30K sub-datasets in only 40K iterations, while for the 50K sub-dataset a winning ticket is not discovered until 60K iterations of pre-training have been completed. 
Furthermore, subnetwork performance often surpasses the performance of the fully-trained dense network \emph{without completing the full pre-training procedure}.

\medskip\noindent
\textbf{ImageNet}.
We perform experiments on the ILSVRC2012, 1000-class dataset \citep{imagenet} to determine how pre-training requirements change for subnetworks pruned to different FLOP levels.\footnote{We do not experiment with different sub-dataset sizes on ImageNet due to limited computational resources.}
We adopt the same experimental and hyperparameter settings as \citep{provable_subnetworks}.
Models are pre-trained for 150 epochs using SGD with momentum and cosine learning rate decay with an initial value of 0.1.
We use a batch size of 128 and weight decay of $5 \cdot 10^{-4}$.
The dense network is independently pruned every 50 epochs, and the subnetwork is fine-tuned for 80 epochs using a cosine learning rates schedule with an initial value of 0.01 before being evaluated.
We first prune models with $\epsilon = 0.02$ and $\epsilon = 0.05$ for MobileNetV2 and ResNet34, respectively, yielding subnetworks with a 40\% reduction in FLOPS and 20\% reduction in parameters in comparison to the dense model.
Pruning is also performed with a larger $\epsilon$ value (i.e., $\epsilon = 0.05$ and $\epsilon = 0.08$ for MobileNetV2 and ResNet34, respectively) to yield subnetworks with a 60\% reduction in FLOPS and 35\% reduction in model parameters in comparison to the dense model.

\begin{table}[]
    \centering
    \begin{tabular}{cccccc}
    \toprule
         \multirow{2}{*}{\textbf{Model}} & \multirow{1}{*}{\textbf{FLOP (Param)}} &  \multicolumn{3}{c}{\textbf{Pruned Accuracy}} & \multirow{2}{*}{\textbf{Dense Accuracy}} \\
         \cmidrule(l){3-5}
         & \textbf{Ratio}& \textbf{50 Epoch} & \textbf{100 Epoch} & \textbf{150 Epoch} &\\
         \midrule
         \multirow{2}{*}{MobileNetV2} & 60\% (80\%) & 70.05 & 71.14 & 71.53 & \multirow{2}{*}{71.70}\\
         & 40\% (65\%) & 69.23 & 70.36 & 71.10 & \\
         \midrule
         \multirow{2}{*}{ResNet34} & 60\% (80\%) & 71.68 & 72.56 & 72.65 & \multirow{2}{*}{73.20}\\
         & 40\% (65\%) & 69.87 & 71.44 & 71.33&\\
         \bottomrule
    \end{tabular}
    \caption{Test accuracy on ImageNet of subnetworks with different FLOP levels derived from dense models with varying amounts of pre-training (i.e., training epochs listed in top row). We report the FLOP/parameter ratio after pruning with respect to the FLOPS/parameters of the dense model.}
    \label{tab:imagenet_prune}
\end{table}

The results are reported in Table \ref{tab:imagenet_prune}.
Although the dense network is pre-trained for 150 epochs, subnetwork test accuracy reaches a plateau after only 100 epochs of pre-training in all cases.
Furthermore, subnetworks with only 50 epochs of pre-training still perform well in many cases.
E.g., the 60\% FLOPS ResNet34 subnetwork with 50 epochs of pre-training achieves a testing accuracy within 1\% of the pruned model derived from the fully pre-trained network.
Thus, \emph{high-performing subnetworks can be discovered with minimal pre-training even on large-scale datasets like ImageNet}.

\medskip
\noindent
\textbf{Discussion.}
These results demonstrate that the number of dense network pre-training iterations needed to reach a plateau in subnetwork performance $i)$ consistently increases with the size of the dataset and $ii)$ is consistent across different architectures given the same dataset.
Discovering a high-performing subnetwork on the ImageNet dataset takes roughly 500K pre-training iterations (i.e., 100 epochs).
In comparison, discovering a subnetwork that performs well on the MNIST and CIFAR10 datasets takes roughly 8K and 60K iterations, respectively.
Thus, the amount of required pre-training iterations increases based on the size of dataset \emph{even across significantly different scales and domains}.
This indicates that dependence of pre-training requirements on dataset size may be an underlying property of discovering high-performing subnetworks no matter the experimental setting. 

Per Theorem \ref{sgd_training_theorem}, the size of the dense network will not impact the number of pre-training iterations required for a subnetwork to perform well.
This is observed to be true within our experiments; e.g., MobileNet and ResNet34 reach plateaus in subnetwork performance at similar points in pre-training for CIFAR10 and ImageNet in Tables \ref{tab:cifar10_prune} and \ref{tab:imagenet_prune}.
However, the actual loss of the subnetwork, as in Lemma \ref{nn_loss_theorem}, has a dependence on several constants that may impact subnetwork performance despite having no aymptotic impact on Theorem \ref{sgd_training_theorem}.
E.g., a wider network could increase the width of the polytope $D_{\mathcal{M}_N}$ or initial loss $\ell(\bold{u}_1)$, leading to a looser upper bound on subnetwork loss.
Thus, different sizes of dense networks, despite both reaching a plateau in subnetwork performance at the same point during pre-training, may yield subnetworks with different performance levels.

Interestingly, we observe that dense network size does impact subnetwork performance.
In Figure \ref{fig:prune_map}, subnetwork performance varies based on dense network width, and subnetworks derived from narrower dense networks seem to achieve better performance.
Similarly, in Tables \ref{tab:cifar10_prune} and \ref{tab:imagenet_prune}, subnetworks derived from MobileNetV2 tend to achieve higher relative performance with respect to the dense model.
Thus, subnetworks derived from smaller dense networks seem to achieve better \emph{relative} performance in comparison to those derived from larger dense networks, suggesting that pruning via greedy forward selection may demonstrate different qualities in comparison to more traditional approaches (e.g., iterative magnitude-based pruning \citep{rethink_pruning}). 
Despite this observation, however, the amount of pre-training epochs required for the emergence of the best-performing subnetwork is still consistent across architectures and dependent on dataset size. 
\end{document}